\newlength{\dhatheight}
\newcommand{\argmax}{\mathop{\rm argmax}}
\newcommand{\argmin}{\mathop{\rm argmin}}
\newcommand{\ignore}[1]{}
\newcommand{\todo}[1]{}
\newcommand{\oldstuff}[1]{}
\newsavebox{\savepar}
\newcommand{\vast}{\bBigg@{3}}
\newcommand{\Vast}{\bBigg@{4}}
\newcommand{\eat}[1]{}
\newtheorem{definition}{Definition}[section]
\newtheorem{theorem}{Theorem}
\newtheorem{proposition}[theorem]{Proposition}
\title{TIP: Typifying the Interpretability of Procedures\footnote{All authors have affiliation to IBM Research.}}
\author{Amit Dhurandhar, Vijay Iyengar, Ronny Luss and Karthikeyan Shanmugam}
\begin{document}
\maketitle

\begin{abstract}
We provide a novel notion of what it means to be interpretable, looking past the usual association with human understanding. Our key insight is that interpretability is not an absolute concept and so we define it relative to a target model, which may or may not be a human. We define a framework that allows for comparing interpretable procedures by linking them to important practical aspects such as accuracy and robustness. We characterize many of the current state-of-the-art interpretable methods in our framework portraying its general applicability. Finally, principled interpretable strategies are proposed and empirically evaluated on synthetic data, as well as on the largest public olfaction dataset that was made recently available \cite{olfs}. We also experiment on MNIST with a simple target model and different oracle models of varying complexity. This leads to the insight that the improvement in the target model is not only a function of the oracle model's performance, but also its relative complexity with respect to the target model. Further experiments on CIFAR-10, a real manufacturing dataset and FICO dataset showcase the benefit of our methods over Knowledge Distillation when the target models are simple and the complex model is a neural network.
\end{abstract}

\section{Introduction}
\label{intro}
What does it mean for a model to be interpretable? From our human perspective, interpretability typically means that the model can be explained, a quality which is imperative in almost all real applications where a human is responsible for consequences of the model. However good a model might have performed on historical data, in critical applications, interpretability is necessary to justify, improve, and sometimes simplify decision making. 

Understanding complex models, however, has two parts. One is providing understandable explanations of its action/prediction on specific cases -``Why did the model act this way on this sample?" These explanations are local to a specific decision on a sample and constitute model explanability \cite{finale,unifiedPI}. However, there is also the seemingly complementary question - ``What useful insight can the model \textit{as a whole} provide to the end user?" Usually this question \cite{bastani2017interpreting,Caruana:2015,irt} is associated with enhancing a broad understanding about the behavior of the model. We in this work define `interpretability' primarily with regards to the second question, while also allowing for interpretations of local behaviors of models that address important aspects of the first question. Note that the two outlined questions, although different, are not mutually exclusive.

A great example of this is a malware detection neural network \cite{malwarecom} which was trained to distinguish regular code from malware. The neural network had excellent performance, presumably due to the deep architecture capturing some complex phenomenon opaque to humans, but it was later found that the primary distinguishing characteristic was the grammatical coherance of comments in the code, which were either missing or written poorly in the malware as opposed to regular code. In hindsight, this seems obvious as you wouldn't expect someone writing malware to expend effort in making it readable. This example shows how the interpretation of a seemingly complex model can aid in creating a simple rule. Please note that one could piece up this summary by looking at model explanations on many example code snippets and inferring after the fact that all factors that contributed to the decisions are related to code comments. However, the essence of interpretability here is about directly finding a simple global rule that captures model behavior as a whole without compromising its  performance.

The above example defines interpretability as humans typically do: we require the model to be understandable. This thinking would lead us to believe that, in general, complex models such as random forests or even deep neural networks are not interpretable. However, just because we cannot always understand what the complex model is doing does not necessarily mean that the model is not interpretable in some other useful sense. It is in this spirit that we define the novel notion of $\delta$-interpretability that is more general than being just interpretable relative to a human. The need for such formalisms was echoed in \cite{lipton2016mythos}, where the author stresses the need for a proper formalism for the notion of interpretability and quantifying methods based on these formalisms. 

Given this, our contributions are two-fold. We first formalize the notion of $\delta$-interpretability by building a general framework under which many existing works fall, as well as extend our definitions to allow for aspects such as robustness.  This new framework is then used as motivation to derive new interpretable procedures and we illustrate their usefulness on various real and synthetic datasets. More specifically, our contributions (and the outline of the paper) are as follows:
\begin{enumerate}
\item Framework Description: Sections 2 - 6
\begin{itemize}
\item Sections 2 and 3: We provide a formal definition of relative interpretability with respect to a target model (TM). It is based on the improvement (or degradation) in the performance of the target model on a task that is brought about by an interpretable procedure communicating information from a more complex model (CM). The key notion is that the target model class remains invariant in this process.  We also specifically address how this is tied to human interpretability.

\item Section 4: We showcase the flexibility of our definition and how it can be easily extended to account for other practical aspects such as robustness of models in finite sample settings. Moreover, we prove how our extended definition reduces to the original one in the ideal setting where we have access to the target data distribution.

\item Sections 5 and 6: We show how several existing state of the art works on interpretability can be cast in our general framework. Moreover, we argue how our framework extends beyond cases where we seemingly have a well-defined goal.
\end{itemize}
\item Methods and Experiments: Section 7
\begin{itemize}
\item Section 7.1: We propose new interpretable procedures that involves weighting by confidence scores as a means to transfer information from a highly accurate complex model to a target model with a priori low accuracy. We derive error bounds for the target model to theoretically ground this procedure.

\item Sections 7.2 and 7.3: We apply our interpretable procedure on synthetic as well as on a real life Olfaction dataset where our procedure greatly improves an interpretable Lasso model using information from the complex model (Random Forests) with superior performance. Moreover, we describe how insights from this improved lasso model has led to further investigations by human experts.

\item Section 7.4: We show that the most complex model need not be the best with respect to a fixed interpretable procedure aligning with some everyday intuition about student-teacher relationships. We demonstrate this using experimental results on the MNIST dataset.

\item Sections 7.5 and 7.6: We further compare our methods with Knowledge Distillation on CIFAR-10 dataset, a real industrial manufacturing dataset and the FICO dataset.
\end{itemize}
\end{enumerate}

In our framework, our target model could be a human where performance in a specific task is measured after interaction of the human and the model in a typical human study. In a more general sense, our target model could be something that is regarded as immediately interpretable by a human. Therefore, improving performance \textit{while retaining} the model class complexity can directly contribute to human understanding. This is the key idea behind our framework. Our framework focuses on measurable global/local insights conveyed by a complex model to the target model that is also cognizant of its usefulness.

We offer an example from the healthcare domain \cite{chang2010}, where interpretability is a critical modeling aspect, as a running example in our paper. The task is predicting future costs based on demographics and past insurance claims (including doctor visit costs, justifications, and diagnoses) for members of the population. 
The data used in \cite{chang2010} represents diagnoses using ICD-9-CM (International Classification
of Diseases) coding which had on the order of 15,000 distinct codes at the time of the study. The high dimensional nature of diagnoses led to the development of various abstractions such as the ACG (Adjusted Clinical Groups) case-mix system \cite{starfield1991}, which output various mappings of the ICD codes to lower dimensional categorical spaces, some even independent of disease. A particular mapping of IDC codes to 264 Expanded Diagnosis Clusters (EDCs) was used in \cite{chang2010} to create a complex model that performed quite well in the prediction task. 

\begin{figure*}[t]
\centering
   \includegraphics[width=\linewidth]{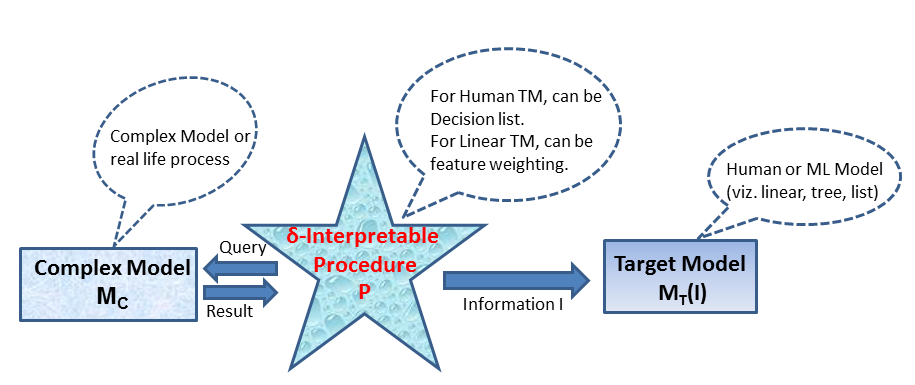}
     \caption{Above we depict what it means to be $\delta$-interpretable. Essentially, our procedure is $\delta$-interpretable if it improves the performance of TM by $\ge \delta$ fraction w.r.t. a target data distribution.}
\label{Intblck}
\end{figure*}


\section{Defining $\delta$-Interpretability}

Let us return to the opening question. Is interpretability simply sparsity, entropy, or something more general? An average person is said to remember no more than seven pieces of information at a time \cite{lisman1995storage}. Should that inform our notion of interpretability? Taking inspiration from the theory of computation \cite{toc} where a language is classified as regular, context free, or something else based on the strength of the machine (i.e. program) required to recognize it, we look to define interpretability along analogous lines.

Based on this discussion, we define interpretability relative to a target model (TM), i.e. $\delta$-interpretability. \emph{The target model in the most obvious setting would be a human, but it doesn't have to be}. It could be a linear model, a decision tree, or any simple model such that decisions of that model on specific cases/samples can be easily explainable (local model explanability using contributing factors) or the model itself naturally enhances understanding of an end-user in that domain. 
The TM in our running healthcare example \cite{chang2010} is a linear model where the features come from an ACG system mapping of IDC codes to only 32 Aggregated Diagnosis Groups (ADGs). This is a simple model for experts to interpret. \eat{In this simple TM, the mapping is based on only five clinical features, namely; duration of condition, severity of condition, diagnostic certainty, etiology of the condition and specialty care involvement, which surprisingly do not identify organ systems or disease.}

A procedure $P$ would qualify as being $\delta$-interpretable if it can somehow convey information to the TM that will lead to improving its performance (e.g., accuracy or AUC or reward) for the task at hand. 
%
%
\eat{Hence, the $\delta$-interpretable model has to transmit information in a way that is consumable by the TM.}
However, the information has to be transmitted in a way that is consumable by the TM.
What this means is that the hypothesis class of the TM remains the \textbf{same before and after} information transfer.\eat{For example, if the TM is a linear model our $\delta$-interpretable model can only tell it how to modify its feature weights or which features to consider.} For example, if the TM is a linear model the information can only tell it how to modify its feature weights or which features to consider.
In our healthcare example, the authors of \cite{chang2010} need a procedure to convey information from the complex 264-dimensional model to the simple linear 32-dimensional model. Any pairwise or higher order interactions would not be of use to this simple linear model.

We highlight three examples that showcase how our definitions in the latter part of this section capture different aspects of \textbf{human interpretability}. This is a testament to the generality of our definitions depicting its power to model varied situations. The main difference in these examples is the metric used to evaluate the performance of the (human) TM:
\begin{enumerate}
\item \textit{Improved Process:} In Section \ref{appl}, we report on an experiment in the advanced manufacturing domain where a rule list (CM) is shown to be $\delta$-interpretable relative to a semi-conductor engineer. Using insights from the rule list, the engineer was able to improve his manufacturing process (measured by wafer yield). Note that the engineer was not necessarily looking for local model explanations in this case.\eat{, something he wasn't able to do just by himself. This shows that he understood the rules learned by the rule list which was validated by his actions improving performance on the task at hand which was wafer yield in this case}.

\item \textit{Matching Belief:} Sometimes, the way to measure improvement in a human interpretable way is to check if the informed target model captures `existing human intuition/belief'. For example, in the setting where you want explanations for classifications of book reviews as positive or negative \cite{lime} (discussed in Section \ref{appl}), the metric that you are interested in improving is feature recall, which in this case is a set of phrases/words with positive or negative connotation that you would expect a good informed target model to pick up. Here, feature recall serves as a proxy for human confidence in a model. 
\eat{Hence, feature recall is the metric you want to enhance here, which can also be considered as a proxy for human confidence in a model in this example.}

\item \textit{New Insight:} In the olfaction experiment where we want to predict odor pleasantness, (discussed in Section \ref{olf}), an accurate random forest (CM) was used to transfer information to a lasso estimator (TM) nearly matching the performance of the CM. The top five features highlighted by lasso provided the experts with insight that enhanced their knowledge \eat{, which could be considered as a metric in this case,} motivating further lab experiments.
\end{enumerate}

Ideally, the performance of the TM should improve w.r.t. some target distribution. The target distribution could just be the underlying distribution, or it could be some reweighted form of it in case we are interested in some localities of the feature space more than others. For instance, in a standard supervised setting this could be generalization error (GE), but in situations where we want to focus on local model behavior the error would be w.r.t. the new reweighted distribution that focuses on a specific region. \emph{In other words, we allow for instance level interpretability as well as global interpretability and capturing of local behaviors that lie in between}. In this sense, one can capture aspects of local model explanability. The healthcare example focuses on mean absolute prediction error (MAPE) expressed as a percentage of the mean of the actual expenditure (Table 3 in \cite{chang2010}). Formally, we define $\delta$-interpretability as follows:

\par\vspace{0.3cm}
\fbox{\begin{minipage}{4.5 in} 
\begin{definition}\label{didef} $\delta$-interpretability: 
Given a target model $M_T$ belonging to a hypothesis class $\mathcal{H}$ and a target distribution $D_T$, a procedure $P$ is $\delta$-interpretable if it produces a model $ M^\prime_T$ in the same hypothesis class $\mathcal{H}$ satisfying the following inequality: $e_{M^\prime_T}\le \delta\cdot e_{M_T}$, where $e_{\mathcal{M}}$ is the expected error of $\mathcal{M}$ relative to some loss function on $D_T$.
\end{definition}
\end{minipage}}
\par\vspace{0.3cm}

The above definition is a general notion of interpretability that does not require the interpretable procedure to have access to a complex model. It may use a complex model (CM) and some other training data set, and statistics about the complex model's actions on that dataset to derive some useful information. However, it may very well act as an oracle conjuring up useful information that will improve the performance of the TM. When there is a CM, a more intuitive but special case of Definition \ref{didef} below defines $\delta$-interpretability based on the ability to transfer information from the CM to the TM using a procedure $P$ so as to improve TM's performance.\eat{
for a CM relative to a TM as being able to transfer information from the CM to the TM using a procedure $P_I$ so as to improve the TMs performance. 
} These concepts are depicted in figure \ref{Intblck}. 

\par\vspace{0.1cm}
\fbox{\begin{minipage}{4.5 in} 
\begin{definition} \label{didefa} CM-based $\delta$-interpretability: 
Given a target model $M_T$ belonging to a hypothesis class $\mathcal{H}$, a complex model $M_C$, and a target distribution $D_T$,
the procedure $P$ is $\delta$-interpretable relative to the model pair ($M_C$, $M_T$), if it derives information $I$ from $M_C$ and produces a model $M_T(I)\in\mathcal{H}$ satisfying the following inequality: $e_{M_T(I)}\le \delta\cdot e_{M_T}$, where $e_{\mathcal{M}}$ is the expected error of $\mathcal{M}$ relative to some loss function on $D_T$.
\end{definition}
\end{minipage}}
\par\vspace{0.3cm}
\eat{One may consider the more intuitive definition of $\delta$-interpretability when there is a CM.
}
We highlight two key ideas behind our definition - a) Interpretability is defined relative to a chosen target model that belongs to a simpler complexity class. In many applications, the target model class can be directly interpreted by a human/end user. This is the reason why the TM is allowed to change only within its hypothesis class. b) When people ask for an interpretation, there is an implicit quality requirement in that the interpretation should provide useful insight for a task at hand. We capture this relatedness of the interpretation to the task by requiring that the interpretable procedure improve the performance of the TM.
\eat{
Without such a requirement explanation may sometimes be arbitrary and useless making the concept of interpretation pointless. Consequently, the crux for any application in our setting is to come up with an interpretable procedure that can ideally improve the performance of a carefully chosen TM.
}


%
%

The closer $\delta$ is to 0 the more interpretable and useful the procedure. Note the error reduction is relative to the TM model itself, not relative to the complex model. An illustration of the above definition is seen in figure \ref{Intblck}. Here we want to interpret a complex process relative to a given TM and target distribution. The interaction with the complex process could simply be by observing inputs and outputs on some data set or could be through delving into the inner workings of the complex process. 
%
%
\eat{
\emph{In addition, it is imperative that $M_T(I)\in \mathcal{H}$ i.e. the information conveyed should be within the representational power of the TM.} 
}

We now clarify the use of the term Information $I$ in the definition. In a normal binary classification task, training label $y \in \{+1,-1\}$ can be considered to be a one bit information about the sample $x$, i.e., ``Which label is more likely given x?", whereas   the confidence score $p(y|x)$ holds richer information, i.e.,  ``How likely is the label y for the sample x?". From an information theoretic point of view, given x and only its training label $y$, there is still uncertainty about $p(y|x)$ prior to training. One possible candidate for information $I$ from a CM is something that could potentially reduce this uncertainty in the confidence score of a label $y$ on a sample $x$ prior to using the training procedure of the TM. Another possibility is that the target hypothesis class has a parameterized set of training algorithms, and an interpretable procedure can use the learned parameters of the complex model as a way to choose a specific training algorithm from the parameterized class.
%
%
\eat{can actually reduce the uncertainty of the $p(y(x)|x)$ in the interval $[1/2,1]$ prior to training the TM. However, the new $M_T(I)$ obtained is more useful if the training method can effectively use this potentially useful information. This is one precise and concrete sense in which, $I$ communicated by the interpretable method are indeed information bits that reduce uncertainty about the confidence score.}

\emph{The advantage of this definition is that the TM isn't tied to any specific entity such as a human and thus neither is our definition.} We can thus test the utility of our definition w.r.t. simpler models (viz. linear, decision lists, etc.), which could be perceived as lower bounds on human complexity. We see examples of this in the coming sections.
%
%
\eat{
%
%
Moreover, a direct consequence of our definition is that it naturally \emph{creates a partial ordering of interpretable procedures} relative to a TM and target distribution, which is in spirit similar to complexity classes for time or space of algorithms. For instance, if $\mathcal{R^+}$ denotes the non-negative real line $\delta_1$-interpretability $\Rightarrow$ $\delta_2$-interpretability, where $\delta_1 \le \delta_2$ $\forall \delta_1,~\delta_2\in \mathcal{R^+}$, but not the other way around. 
}

\section{Understanding Definition \ref{didef}}
In order to better understand our definition of $\delta$-interpretability, we ask the following question: how can one (i.e. an observer) validate the fact that information shared by one entity (or procedure) can be interpreted by another entity (or target model)? 
One may argue that the target model may communicate this directly to the observer, however this then assumes that the two are able to communicate. The most general setting is where no such assumption is made in which case the observer can only detect tangible transfer of information from the procedure to the target model by change in performance of the target model. 
The case where one wants to understand/interpret some concept is just a special case of this setting.
Even in this case, the only way to really know you understand the concept is to test oneself on a relevant task.
\eat{
The case where one wants to understand/interpret some concept is just a special case of this setting, where the observer and the target model can be considered to communicate perfectly or in essence are one and the same. 
}

\begin{figure}[h]
\centering
   \includegraphics[width=0.8\linewidth]{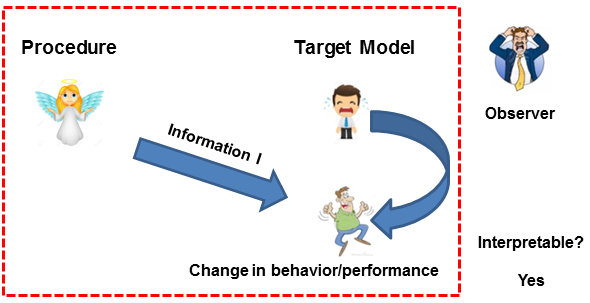}
     \caption{An intuitive justification of our definition.}
\label{defjust}
\end{figure}

We depict this concept in figure \ref{defjust}. An observer asks whether or not a given procedure is interpretable. A target model is selected and its initial state (viz., performance on a task) is observed, depicted by a weeping individual. The procedure conveys information $I$ to the target model, resulting in an updated target model whose state, depicted by a jovial individual, is observed. The observer thus has his answer regarding interpretability of the procedure. Hence, interpretability measures the change in behavior or performance (here being a change from sadness to happiness), and thus we truly need to define \emph{$\delta$-interpretability} rather than simply \emph{interpretability} in order to capture the impact of this change.

We can also think of this framework as the procedure being a teacher, the target model being a student, and the observer (i.e., the student's parent) wants to measure the quality of the teacher. The parent observes the student a priori on some task, and then measures the student on the same task after the teacher's lesson. The student's change in performance would dictate the teacher's ability to convey information in a manner that is interpretable to the student. Note that the students performance could become worse in which case $\delta > 1$, indicating that the teacher was interpretable but bad.

In fact, the evaluation of explanations in a recently conducted FICO explainability challenge ( https://community.fico.com/community/xml/pages/rules) closely matches our definition.

\section{Practical Definition of Interpretability}

We first extend our $\delta$-interpretability definition to the practical setting where we don't have the target distribution, but rather just samples. We then show how this new definition reduces to our original definition in the ideal setting where we have access to the target distribution.

\subsection{($\delta,\gamma$)-Interpretability: Performance and Robustness}

Our definition of $\delta$-interpretability just focuses on the performance of the TM. However, in most practical applications robustness is a key requirement. This could be quite crucial in fields like healthcare where small perturbations of the medical record does not produce drastic differences in methods of treatments. In fact, robustness to adversarial manipulations has been pointed out as a desirable element for any measure of interpretability \cite{lipton2016mythos}. The author in \cite{lipton2016mythos} gives the example of credit scoring models whose features can be manipulated by individuals by artificially requesting credit line increases which can be considered as adversarial manipulations. Given this we can extend our definition of  $\delta$-interpretability to account for robustness besides just performance. This also showcases the flexibility of our definition where orthogonal metrics, such as robustness in this case, can be added to better capture interpretability in diverse settings and applications.

So what really is a robust model? Intuitively, it is a notion where one expects the same (or similar) performance from the model when applied to ``nearby" inputs. In practice, this is many times done by perturbing the test set and then evaluating performance of the model \cite{carlini}. If the accuracies are comparable to the original test set then the model is deemed robust. Hence, this procedure can be viewed as creating alternate test sets on which we test the model. Thus, the procedures to create adversarial examples or perturbations can be said to induce a distribution $D_R$ from which we get these alternate test sets. \emph{The important underlying assumption here is that the newly created test samples are at least moderately likely w.r.t. target distribution.} Of course, in the case of non-uniform loss functions the test sets on whom the expected loss is low are uninteresting. This brings us to the question of when is it truly interesting to study robustness.

\emph{It seems that robustness is really only an issue when your test data on which you evaluate is incomplete i.e. it doesn't include all examples in the domain.} If you can test on all points in your domain, which could be finite, and are accurate on it then there is no need for robustness. That is why in a certain sense, low generalization error already captures robustness since the error is over the entire domain and it is impossible for your classifier to not be robust and have low GE if you could actually test on the entire domain. The problem is really only because of estimation on incomplete test sets \cite{kushtst}. \eat{Using conditional entropy definition for GE ie H(Y|X) as Karthik was suggesting doesnt solve the problem for incomplete test sets since we have seen time and again with these networks that they may give high confidences for the correct class on these test sets but are still not robust.} Given this we extend our definition of $\delta$-interpretability for practical scenarios.

\par\vspace{0.25cm}
\fbox{\begin{minipage}{4.5 in}
\begin{definition}($\delta,\gamma$)-interpretability:\label{didefp} Given a target model $M_T$ belonging to a hypothesis class $\mathcal{H}$, a sample $S_T$ from the target distribution $D_T$, a sample $S_R$ from a distribution $D_R$, a procedure $P$ is ($\delta, \gamma$)-interpretable relative to $(D_T\sim) S_T$ and $(D_R\sim) S_R$ if it produces a model $M^\prime_T\in\mathcal{H}$ satisfying the following inequalities:
\begin{itemize}
\item $\hat{e}^{S_T}_{M^\prime_T}\le \delta\cdot \hat{e}^{S_T}_{M_T}$ (performance)
\item $\hat{e}^{S_R}_{M^\prime_T}-\hat{e}^{S_T}_{M^\prime_T}\le \gamma\cdot (\hat{e}^{S_R}_{M_T}-\hat{e}^{S_T}_{M_T})$ (robustness)
\end{itemize}
where $\hat{e}^{\mathcal{S}}_{\mathcal{M}}$ is the empirical error of $\mathcal{M}$ relative to some loss function.
\end{definition}
\end{minipage}}
\par\vspace{0.25cm}

The first term above is analogous to the one in Definition \ref{didef}. The second term captures robustness and asks how representative is the test error of $M^\prime_T$ w.r.t. its error on other high probability samples when compared with the performance of $M_T$ on the same test and robust sets. This can be viewed as an orthogonal metric to evaluate interpretable procedures in the practical setting. This definition could also be adapted to a more intuitive but restrictive definition analogous to Definition \ref{didefa}.

\subsection{Examples of $D_R$}

The distribution $D_R$ is the alternate distribution to $D_T$ that we wish to test our model on. We do not have to explicitly define $D_R$ as it could be an induced distribution based on some process that generates samples. Below are some examples of processes that can induce $D_R$.

\begin{itemize}
\item \emph{Adversarial attacks:} Robustness of Neural Networks is an active research area. Adversarial attacks \cite{carlini} is one of the primary ways in which to test the robustness of these models. The attacks perturb test samples so that they are virtually indistinguishable to a human but fool a deep neural network. Thus, the attacks can be viewed as inducing a distribution $D_R$ where the perturbed test samples can be seen to represent $S_R$. The distillation example in the next section uses an adversarial attack to compute $\gamma$ for a deep neural network.
\item \emph{Domain Shift:} In many applications, the data that a model is trained and tested on may have a different distribution than the data that a model sees on deployment \cite{lcf}. The distribution of the data in the deployed setting can be seen to represent $D_R$. The prototype selection example in the next section tests the mmd-critic method on a skewed digit distribution, in addition to the original one, representing $D_R$.
\item \emph{Random Noise:} To test robustness of methods many times slight random perturbations are added to samples or a small fraction of labels are flipped. These processes of randomly perturbing the data can again be perceived as inducing a distribution $D_R$ over samples $S_R$ that are generated from them. In the synthetic experiment in section \ref{simexp} we perform random label flips for a small fraction of instances (5\%), which induces $D_R$ and we compute $\gamma$ for the linear TM.
\end{itemize}

\subsection{Reduction to Definition \ref{didef}}

\eat{\emph{An (ideal) adversarial example is not just one which a model predicts incorrectly, but rather it must satisfy also the additional requirement of being a highly probable sample from the target distribution.} Without the second condition even highly unlikely outliers would be adversarial examples. But in practice this is not what people usually mean, when one talks about adversarial examples.}

Sometimes, some models can be exhaustively trained with large number of samples from a target distribution that produces \textit{all} the realistic samples that the model could be tested on. An example would be a huge image corpus consisting of a few hundred million images (including all perturbations of images that are considered meaningful). Given this, ideally, we should choose $D_R=D_T$ so that we test the model mainly on important examples. If we could do this and test on the entire domain our Definition \ref{didefp} would reduce to Definition \ref{didef} as seen in the following proposition.

\begin{proposition}
In the ideal setting, where we know $D_T$, we could set $D_R=D_T$ and compute the true errors, ($\delta,\gamma$)-interpretability would reduce to $\delta$-interpretability.
\end{proposition}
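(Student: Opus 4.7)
The plan is to directly instantiate Definition \ref{didefp} under the two hypotheses of the proposition, namely (i) $D_R = D_T$ and (ii) that the empirical errors $\hat{e}^{S_T}_{\mathcal{M}}$ and $\hat{e}^{S_R}_{\mathcal{M}}$ are replaced by the true expected errors $e_{\mathcal{M}}$ under the relevant distribution. Once both substitutions are made, the two inequalities in Definition \ref{didefp} can be rewritten without reference to either $S_T$ or $S_R$, and one can check them against the single inequality of Definition \ref{didef}.

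The first step is to handle the performance condition. Replacing $\hat{e}^{S_T}_{\mathcal{M}}$ by $e_{\mathcal{M}}$ (the expected error on $D_T$), the inequality $\hat{e}^{S_T}_{M^\prime_T}\le \delta\cdot \hat{e}^{S_T}_{M_T}$ becomes $e_{M^\prime_T} \le \delta \cdot e_{M_T}$, which is precisely the condition required by Definition \ref{didef}. Thus any procedure $P$ that satisfies the performance inequality of $(\delta,\gamma)$-interpretability in this ideal setting is automatically $\delta$-interpretable.

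The second step is to check that the robustness condition places no additional restriction. Because $D_R = D_T$, the true error of any model $\mathcal{M}$ under $D_R$ equals its true error under $D_T$, i.e.\ $e^{D_R}_{\mathcal{M}} = e^{D_T}_{\mathcal{M}} = e_{\mathcal{M}}$. Substituting into the robustness inequality gives $e_{M^\prime_T} - e_{M^\prime_T} \le \gamma \cdot (e_{M_T} - e_{M_T})$, i.e.\ $0 \le 0$, which holds trivially for every $\gamma \ge 0$. Consequently the robustness condition is vacuous, and the only operative constraint is the performance condition from the first step, matching Definition \ref{didef} exactly.

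There is no genuine technical obstacle here; the proposition is essentially a sanity check that the extended definition is a conservative generalization of the original. The only point worth being careful about is making the informal phrase \emph{``compute the true errors''} precise — namely, interpreting the empirical quantities $\hat{e}^{S_T}_{\mathcal{M}}$ and $\hat{e}^{S_R}_{\mathcal{M}}$ as their population counterparts with respect to $D_T$ and $D_R$ respectively — after which the substitutions above yield the reduction immediately.
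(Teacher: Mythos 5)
Your proposal is correct and follows essentially the same route as the paper's own proof: replace the empirical errors by their population counterparts (the paper does this by taking expectations), observe that the performance inequality becomes exactly $e_{M^\prime_T}\le\delta\cdot e_{M_T}$ of Definition \ref{didef}, and that with $D_R=D_T$ the robustness inequality collapses to the vacuous $0\le 0$. No substantive difference from the paper's argument.
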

\begin{proof}
Since $D_R=D_T$, by taking expectations we get for the first condition: 
\begin{equation*}
\begin{split}
E[\hat{e}^{S_T}_{M^\prime_T} - \delta\hat{e}^{S_T}_{M_T}]&\le 0\\
e_{M^\prime_T}-\delta\cdot e_{M_T}&\le 0
\end{split}
\end{equation*}

For the second equation we get: 
\begin{equation*}
\begin{split}
E[\hat{e}^{S_R}_{M^\prime_T} - \hat{e}^{S_T}_{M^\prime_T}-\gamma \hat{e}^{S_R}_{M_T}+\gamma\hat{e}^{S_T}_{M_T}]&\le 0\\
e_{M^\prime_T}-e_{M^\prime_T}-\gamma e_{M_T}+\gamma e_{M_T}&\le 0\\
0&\le 0.
\end{split}
\end{equation*}
\end{proof}

The second condition vanishes and the first condition is just the definition of $\delta$-interpretability. Our extended definition is thus consistent with Definition \ref{didef} where we have access to the target distribution.

\vspace{.5cm}
\noindent\textbf{Remark:} Model evaluation sometimes requires us to use multiple training and test sets, such as when doing cross-validation. In such cases, we have multiple target models $M^i_T$ trained on independent data sets, and multiple independent test samples $S^i_T$ (indexed by $i=\{1,\ldots,K\}$). The 
empirical error above can be defined as $(\sum_{i=1}^K{\hat{e}_{M^i_T}^{S^i_T}})/K$. Since $S^i_T$, as well as the training sets, are sampled from the same target distribution $D_T$, the reduction to Definition \ref{didef} would still apply to this average error, since $E[\hat{e}_{M^h_T}^{S^i_T}] = E[\hat{e}_{M^j_T}^{S^k_T}]$ $\forall$ $h,i,j,k$.

\begin{table*}[t]
\begin{center}
  \begin{tabular}{|c|c|c|c|c|c|c|}
    \hline
Interpretable & TM & $\delta$ & $\gamma$ & $D_R$ & Dataset ($S_T$) & Performance\\
Procedure &&&&&& Metric\\
\hline
\hline
EDC Selection & OLS & 0.925 & 0 & Identity & Medical Claims & MAPE\\
(Chang and Weiner 2010)&&&&&&\\
\hline
Distillation & DNN & 1.27  & 0.8 & $L_2$ attack & MNIST & Classification\\
(Carlini and Wagner 2017)&&&&&&error\\
\hline
MMD-critic &  & 0.92 & 0.98 &  &  & \\
K-medoids &  & 0.73 & 0.96 &  &  & \\
ProtoDash & NPC & 0.57 & 0.97 & Skewed & MNIST & Classification\\
ProtoGreedy &  & 0.57 & 0.97 &  &  & error\\
P-lasso &  & 1.15 & 0.98 &  &  & \\
\hline
LIME & SLR & 0.1 & 0 & Identity & Books & Feature\\
(Ribeiro et. al. 2016)&&&&&&Recall\\
\hline
Interpretable MDP & Static & 0.579 & 0 & Identity & TUI Travel & Conversion \\
& & & & & Products & Rate \\
(Petrik and Luss 2016)&&&&&&\\
 \hline
Rule Lists & Human & 0.95 & 0 & Identity & Manufacturing & Yield\\
\hline
Tree Extraction & Human & 0.4 & 0 & Identity & Health care & Fidelity\\
(Bastani et. al. 2017)&&&&&&\\
\hline
Decision Set & Human & 0.06 & 0 & Identity & Medical user & Classification\\
Decision List & Human & 0.36 & 0 & Identity & study & error\\
(Lakkaraju et. al. 2016)&&&&&&\\
\hline
\end{tabular}
\end{center}
  \caption{Above we see how our framework can be used to characterize interpretability of methods across applications.}
\label{inttbl}
\end{table*}

\section{Application to Existing Interpretable Methods}
\label{appl}

We now look at some current methods and how they fit into our framework.\\

\noindent\textbf{EDC Selection:} The running healthcare example of \cite{chang2010} considers a complex model based on 264 EDC (Expanded Diagnosis Codes) features and a simpler linear model based on 32 ACG (Adjusted Clinical Groups) features, and both models also include the same demographic variables. The complex model has a MAPE of 96.52\% while the linear model has a MAPE of 103.64\%. The authors in \cite{chang2010} attempt to improve the TM's performance by including some EDC features. They develop a stepwise process for generating selected EDCs based on significance testing and broad applicability to various types of expenditures (\cite{chang2010}, Additional File 1). This stepwise process can be viewed as a $(\delta, \gamma)$-interpretable procedure that provides information in the form of 19 EDC variables which, when added to the TM, improve the performance from 103.64\% to 95.83\% and is thus (0.925, 0)-interpretable, since $0.925=\frac{95.83}{103.64}$ and there is no robustness test so $D_R$ is identity which means it is same as $D_T$ making $\gamma=0$. Note the significance since, given the large population sizes and high mean annual healthcare costs per individual, even small improvements in accuracy can have high monetary impact.

One may argue here as to how the hypothesis class or complexity is maintained by the interpretable procedure as we are adding variables. Although it is not maintained in the standard VC dimension \cite{vapnik} sense, the complexity here is what the experts consider as interpretable and for them adding these variables, which are not in the 100s, is interpretable or can be understandably consumed by them.\\

\noindent\textbf{Distillation:} Distillation is a method to train a possibly smaller neural network using softmax scores of a larger pretrained neural network on a training dataset after a suitable temperature scaling of the final softmax layer \cite{distill}. A special version of this is called \textit{defensive distillation} \cite{carlini} if the sizes of both neural nets remain the same while a very high temperature is used for the softmax score scaling. The purpose of defensive distillation  is to add more robustness to the model. In the case of distillation (defensive or otherwise), if you consider a DNN to be a TM then you can view defensive distillation as a $(\delta, \gamma)$-interpretable procedure between two neural networks. We compute $\delta$ and $\gamma$ from results presented in \cite{carlini} on the MNIST dataset for a state-of-the-art deep network, where the authors adversarially perturb the test instances, such that the distortion introduced is  `imperceptible' to the human eye, and try to check the robustness of any neural network. The $\delta$ is computed using accuracies of the original and distilled networks on the unperturbed test dataset. $\gamma$ is computed based on the accuracies on the adversarially perturbed test dataset for both the original and the distilled network. We see here that defensive distillation makes the DNN slightly more robust at the expense of it being a little less accurate. \\

\noindent\textbf{Prototype Selection:} We compare 5 prototype selection algorithms -- MMD-critic \cite{l2c}, K-medoids \cite{sproto}, P-Lasso \cite{classo}, ProtoDash and ProtoGreedy \cite{proto} -- in our framework. 
The TM was a nearest prototype classifier (NPC) \cite{l2c} that was initialized with 200 random prototypes which it used to create the initial classifications. We implemented and ran these prototype selection algorithms on randomly sampled MNIST training sets of size 1500 where the number of prototypes was set to 200. The test sets were 5420 in size which is the size of the least frequent digit. We had a representative test set and then 10 highly skewed test sets where each contained only a single digit. This setup is similar to the one in \cite{proto}. The representative test set was used to estimate $\delta$ and the 10 skewed test sets were used to compute $\gamma$.

The accuracy of the NPC using random prototypes was 74.1\% on the representative set. The corresponding accuracies using MMD-critic, K-medoids, P-Lasso, ProtoDash and ProtoGreedy were 76.3\%, 79.2\%, 70.1\%, 85.2\% and 85.2\% respectively. These accuracies were similar on the skewed datasets. We see in Table \ref{inttbl} the $\delta$s and $\gamma$s computed based on these numbers. We observe that although majority of the methods are interpretable w.r.t. the NPC TM, ProtoDash and ProtoGreedy are the most. The robustness of all these methods is marginally better than the (initial) baseline TM.\\

\noindent\textbf{LIME:} We consider the experiment in \cite{lime} where they use sparse logistic regression (SLR) as a target model to classify a review as positive or negative on the Books dataset. The SLR model is built based on an already trained complex model that they want to interprete. They also train an SLR model based on random feature selection. Their main objective here is to see if their interpretable procedure is superior to other approaches in terms of selecting the true important features. Hence, the performance metric here is the fractional overlap between the features highlighted by an interpretable method and the true set of important features that have been indicated by human experts. We observe that their performance in selecting the important features (error=7.9\%) is significantly better than random feature selection (error=82.6\%) which can be quantified by our approach based on the corresponding errors as $\delta=0.1$. 
The other experiments can also be characterized in similar fashion. In cases where only explanations are provided with no explicit metric one can view the experts confidence in the method as a metric which good explanations will enhance.\\

\noindent\textbf{Interpretable MDP:} The authors used a constrained MDP formulation \cite{imdp} to derive a product-to-product recommendation policy for the European tour operator TUI. The goal was to generate buyer conversions and to improve a simple product-to-product policy based on static pictures of the website and what products are currently looked at. The constrained MDP results in a policy that is just as simple but greatly improves the conversion rate which is averaged over 10 simulations where customer behavior followed a mixed logit customer choice model with parameters fit to TUI data. The mean normalized conversion rate increased from 0.3377 to 0.6167. This leads to a $\delta$ value of 0.579.\\

\noindent\textbf{Rule Lists:}
The authors built a rule list on a semi-conductor manufacturing dataset \cite{jmlr14Amit} of size 8926. In this data, a single datapoint is a wafer, which is a group of chips, and measurements, that corresponds to 1000s of input features (temperatures, pressures, gas flows, etc.), made on this wafer throughout its production. The goal was to provide the engineer some insight into what, if anything, was plaguing his process so that he can improve performance. The authors built a rule list \cite{twl} of size at most 4 which we showed to the engineer. The engineer figured out that there was an issue with some gas flows which he then fixed. This resulted in 1\% more of his wafers ending up within specification. In other words, his yield increased from 80\% to 81\%, which corresponds to a $\delta$ value of 0.95. This is significant since even a small increase in yield corresponds to billions of dollars in savings.\\

\noindent\textbf{Tree Extraction:} In \cite{bastani2017interpreting}, the authors acquired patient records from a leading electronic medical records company. The data they used had 578 patients from which they obtained 382 features. The features contained demographic information as well as medical history of each patient. The goal was to predict if a patient had type II diabetes in his most recent visit. The authors conducted experiments using standard decision trees and their approach of decision tree extraction. The complex model was a random forest. They observed that their method produced (interpretable) decision trees closest in performance to the random forest. The fidelity of the standard decision tree was 0.85, while that of the extracted tree was 0.94. From these we can compute $\delta=\frac{0.06}{0.15}$ which is reported in table \ref{inttbl}. Similar qualitative gains were seen with human experts.\\

\noindent\textbf{Decision Sets and Decision Lists:} In \cite{Lakk}, the authors conducted an online user study with 47 students based on a health care dataset. The goal was if based on the symptoms the students could correctly estimate if the patient had a certain disease. There we 10 questions and each question had a binary outcome where student had to answer "true" or "false" given the potential disease. Without any information and based on random guessing the baseline performance would be 50\% in expectation. The authors used their method of decision sets and another method decision lists to convey rules connecting symptoms to diseases. They found that their decision set improved the accuracy of the students to 97\%, while the decision list led to an accuracy of 82\%. Based on this we could compute $\delta$s that are reported in table \ref{inttbl}. This shows that the decision set was significantly more interpretable for the conducted user study.

\section{Framework Generalizability}
\label{assump}

It seems that our definition of $\delta$-interpretability requires a predefined goal/task. While (semi-)supervised settings have a well-defined target, we discuss other applicable settings.


In unsupervised settings, although we do not have an explicit target, there are quantitative measures \cite{charubook} such as Silhouette or mutual information that are used to evaluate clustering quality. Such measures which people use to evaluate quality would serve as the target loss that the $\delta$-interpretable procedure would aim to improve upon. The target distribution in this case would just be the instances in the dataset. If a generative process is assumed for the data, then that would dictate the target distribution.

In other settings such as reinforcement learning (RL) \cite{reinf}, the $\delta$-interpretable procedure would try to increase the expected discounted reward of the agent by directing it into more lucrative portions of the state space. In inverse RL on the other hand, it would assist in learning a more accurate reward function based on the observed behaviors of an intelligent entity. The methodology could also be used to test interpretable models on how well they convey the causal structure \cite{pearl} to the TM by evaluating the TMs performance on counterfactuals before and after the information has been conveyed.

\section{Candidate (Model Agnostic) $\delta$-Interpretable Procedures}

We now provide theoretically grounded $\delta$-interpretable strategies that use the CMs confidence scores to weight the training data that the TM trains on. Although the procedures are described for binary classification they straightforwardly extend to multiclass settings. We then illustrate how these procedures improve the performance of simple TMs on 5 real data sets: olfaction data, MNIST, CIFAR-10, a manufacturing dataset and a home loan dataset \cite{FICO}. A 2-dimensional synthetic example further offers a visual understanding of $\delta$-interpretability.

\subsection{Derivation of $\delta$-interpretable Procedures}

We now derive $\delta$-interpretable procedures based on different classification frameworks as well as error bounds for these procedures.

\subsubsection*{Two popular Frameworks for Classification}
We discuss two popular frameworks for binary classifiers based on their training methods. 

\noindent\textit{1. Expected Risk Minimization Models (ERM):}
Suppose the target model is optimized according to empirical risk minimization on $m$ training samples using the risk function $r(y,x,\mathbf{\theta})$ given by:
$\min \limits_{\theta} \frac{1}{m} \sum_{i=1}^m r(y_i,x_i,\mathbf{\theta}) $. Let us assume that $0 \leq r(\cdot) \leq 1$. The classification rule for a new sample $x$ is $\argmin_{y \in \{+1,-1\}} r(y,x,\mathbf{\theta})$.

\noindent\textit{2. Maximum Likelihood Estimation Models (MLE):}
In this case, the binary classifier is specified directly by $p_{\mathrm{TM}}(y|x; \mathbf{\theta})$. Given $m$ training samples $(y_i,x_i)$, the following likelihood optimization is performed:
\[ \min \limits_{\theta} \frac{1}{m} \sum_{i=1}^m - \log p_{\mathrm{TM}}(y_i|x_i; \mathbf{\theta}).\] The classification rule for a new sample $x$ in this case is $\argmax_{y \in \{+1,-1\}} p_{\mathrm{TM}}(y|x)$.

 Let the shorthand notation $r_1(x)$ denote $r(+1,x,\mathbf{\theta})$ while $r_2(x)$ denote $r(-1,x,\mathbf{\theta})$. Let $y'(x)= \argmax \limits_{y} p_{\mathrm{CM}}(y|x)$, $r_1^{y'}(x)=r(y'(x),x,\mathbf{\theta})$, and $r_2^{y'}(x)=r(-y'(x),x,\mathbf{\theta})$. 

\subsubsection*{Generalization Error bounds}
Let us assume that the complex model $\mathrm{CM}$ is highly accurate. Hence, we assume that the data is generated according to the distribution ${\cal D}_{\mathrm{CM}} = p(x) p_{\mathrm{CM}} (y|x)$. The error obtained by applying the TM on the data in the MLE case is given by: $\mathbb{E}_{{\cal D}_{\mathrm{CM}}}[ \mathbf{1}_{p_{\mathrm{TM}}(y|x;\mathbf{\theta})<=1/2} ]$. For the ERM case, it is given by: $\mathbb{E}_{{\cal D}_{\mathrm{CM}}}[ \mathbf{1}_{r(y,x,\mathbf{\theta})>r(-y,x,\mathbf{\theta})}]$. Theorem \ref{mainthm} below (proof in appendix) bounds the squared error for both ERM and MLE, along with a reformulation of ERM error. The first term in the bounds is either the weighted ERM or the weighted MLE problem. The second term in ERM (a) and MLE penalizes the margin of the TM classifier, while the second term in ERM (b) is the quantization error incurred from converting confidence scores to hard classifications.

\begin{theorem} The error bounds for the ERM and MLE cases are as follows:\\
  \textbf{ERM case (a):}\\  $\mathbb{E}^2_{{\cal D}_{\mathrm{CM}}}[ \mathbf{1}_{r(y,x,\mathbf{\theta})>r(-y,x,\mathbf{\theta})} ]\leq \mathbb{E}_{p(x)} \left[c\cdot p_{\mathrm{CM}}(+1|x) r_1(x) + c\cdot p_{\mathrm{CM}}(-1|x) r_2(x) \right]$  \\
  $ +\mathbb{E}_{p(x)}\left[\log (1 +e^{-c \lvert r_1(x) - r_2(x) \rvert}) + 2e^{-2c\lvert r_1(x) - r_2(x) \rvert}\right] $  \\\\
  \textbf{ERM case (b):} \\
$\mathbb{E}_{{\cal D}_{\mathrm{CM}}}[\mathbf{1}_{r(y,x,\theta)>r(-y,x,\theta)}]= \mathbb{E}_{p(x)} \left[ 2\left| \frac{1}{2}-p_{\mathrm{CM}}(y'(x)|x) \right| \cdot\mathbf{1}_{r_1^{y'}(x)> r_2^{y'}(x)}\right.$ \\
$+ \left.\frac{1}{2}- \left| \frac{1}{2}-p_{\mathrm{CM}}(y'(x)|x) \right| \right]$\\\\
\textbf{MLE case:}\\
  $\mathbb{E}^2_{{\cal D}_{\mathrm{CM}}}[ \mathbf{1}_{p_{\mathrm{TM}}(y|x;\mathbf{\theta})<=1/2} ]\leq  \mathbb{E}_{p(x)} \left[ -(p_{\mathrm{CM}}(+1|x)) \log( p_{\mathrm{TM}}(+1|x;\mathbf{\theta} )) \right.$\\  
  $\left. -p_{\mathrm{CM}}(-1|x)\log( p_{\mathrm{TM}}(-1|x;\mathbf{\theta} )) + 2e^{-2\lvert \log p_{\mathrm{TM}}(-1|x;\mathbf{\theta}) - \log p_{\mathrm{TM}}(+1|x;\mathbf{\theta})  \rvert} \right]$
\label{mainthm}
\end{theorem}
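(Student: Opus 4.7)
The overarching strategy is to replace the $0$-$1$ error indicator by a smooth surrogate whose expectation is either the weighted ERM objective or the cross-entropy, and to control the approximation slack with an exponentially decaying margin term. Case (b) is the only exact identity; (a) and the MLE case are inequality bounds proved via the same pointwise surrogate lemma, after which Jensen's inequality lifts the single expectation on the right to the squared expectation on the left.

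I would dispatch ERM case (b) first, since it is algebraic. Fix $x$ and write the conditional error as $p_{\mathrm{CM}}(+1|x)\mathbf{1}_{r_1 > r_2} + p_{\mathrm{CM}}(-1|x)\mathbf{1}_{r_2 > r_1}$. On the almost sure event $\{r_1 \neq r_2\}$ the two indicators sum to $1$; combining this with $p_{\mathrm{CM}}(y'|x) = 1/2 + |1/2 - p_{\mathrm{CM}}(y'|x)|$ (which holds because $y'(x) = \argmax_y p_{\mathrm{CM}}(y|x)$ means $p_{\mathrm{CM}}(y'|x) \geq 1/2$), a two-line case split on whether $y' = +1$ or $y' = -1$ rewrites the conditional error as $1/2 - |1/2 - p_{\mathrm{CM}}(y'|x)| + 2|1/2 - p_{\mathrm{CM}}(y'|x)|\mathbf{1}_{r_1^{y'} > r_2^{y'}}$. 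Taking $\mathbb{E}_{p(x)}$ of both sides delivers the claimed identity.

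For ERM case (a) the key lemma I would establish is the pointwise surrogate inequality
\begin{equation*}
\mathbf{1}_{w > 0} \;\leq\; \log\!\left(1 + e^{cw}\right) + 2\, e^{-2c|w|} \qquad (w \in \mathbb{R},\; c > 0).
\end{equation*}
For $w \leq 0$ the right-hand side is nonnegative so the inequality is trivial; for $w > 0$ one substitutes $u = cw$ and shows that the minimum of $g(u) = \log(1+e^u) + 2 e^{-2u}$ over $u > 0$ exceeds $1$ by checking $g'(u) = \sigma(u) - 4 e^{-2u}$ (where $\sigma$ is the logistic sigmoid), noting $g'(0) < 0$ and $g'(\infty) > 0$, locating the unique critical point where $\sigma(u^*) = 4 e^{-2 u^*}$, and verifying numerically/by monotone bracketing that $g(u^*) > 1$. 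Applying the lemma with $w = r(y,x,\theta) - r(-y,x,\theta)$, together with the identity $\log(1+e^{cw}) = c w_+ + \log(1+e^{-c|w|})$ and the bound $w_+ \leq r(y,x,\theta)$ (which uses $r \geq 0$), gives the pointwise upper bound
\begin{equation*}
\mathbf{1}_{r(y,x,\theta) > r(-y,x,\theta)} \;\leq\; c\, r(y,x,\theta) + \log\!\left(1+e^{-c|r_1 - r_2|}\right) + 2\, e^{-2c|r_1 - r_2|}.
\end{equation*}
Taking $\mathbb{E}_{\mathcal{D}_{\mathrm{CM}}}$ collapses the first term to $\mathbb{E}_{p(x)}[c\, p_{\mathrm{CM}}(+1|x) r_1 + c\, p_{\mathrm{CM}}(-1|x) r_2]$, and Jensen's inequality $(\mathbb{E}[\mathbf{1}])^2 \leq \mathbb{E}[\mathbf{1}^2] = \mathbb{E}[\mathbf{1}]$ converts the single expectation to the squared one, yielding the stated inequality.

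For the MLE case, setting $\Delta = \log p_{\mathrm{TM}}(y|x) - \log p_{\mathrm{TM}}(-y|x)$ one has the identity $-\log p_{\mathrm{TM}}(y|x) = \log(1 + e^{-\Delta})$, so the same surrogate lemma applied to $-\Delta$ in place of $cw$ yields the pointwise bound $\mathbf{1}_{p_{\mathrm{TM}}(y|x) \leq 1/2} = \mathbf{1}_{\Delta \leq 0} \leq -\log p_{\mathrm{TM}}(y|x) + 2 e^{-2|\Delta|}$. Taking expectation under $\mathcal{D}_{\mathrm{CM}}$ turns the logarithm into the weighted cross-entropy, and Jensen closes the argument. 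The main obstacle across (a) and the MLE case is pinning down the pointwise surrogate lemma with exactly the constants appearing in the statement; since $\sigma(u) = 4 e^{-2u}$ has no closed-form root, one must argue by monotonicity rather than by explicit computation that the minimum of the surrogate is at least $1$. Everything else reduces to routine expectation bookkeeping and one application of Jensen's inequality.
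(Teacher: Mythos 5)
Your proof is correct, but for ERM case (a) and the MLE case it takes a genuinely different route from the paper. The paper never argues pointwise: it endows the ERM model with softmax pseudo-confidence scores $p_{\mathrm{TM}}(+1|x;\theta)=e^{-cr_1(x)}/(e^{-cr_1(x)}+e^{-cr_2(x)})$, decomposes $\mathbb{E}_{\mathcal{D}_{\mathrm{CM}}}[\mathbf{1}_{p_{\mathrm{TM}}(y|x;\theta)\le 1/2}]$ into a distribution-mismatch term plus the residual error under $\mathcal{D}_{\mathrm{TM},\theta}$, bounds the mismatch by total variation and Pinsker's inequality by $\sqrt{\tfrac12 \mathrm{KL}(p_{\mathrm{CM}}\Vert p_{\mathrm{TM}})}$, bounds the KL by the weighted cross-entropy (which, through the softmax scores, becomes the weighted risk plus the $\log(1+e^{-c|r_1-r_2|})$ term), bounds the residual term by $\mathbb{E}_{p(x)}[e^{-c|r_1-r_2|}]$, and finally assembles the square via $(a+b)^2\le 2a^2+2b^2$ and Jensen on $x^2$ --- that is where the squared expectation and the $2e^{-2c|\cdot|}$ term originate. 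You instead prove a single pointwise surrogate inequality $\mathbf{1}_{w\ge 0}\le \log(1+e^{cw})+2e^{-2c|w|}$ (the minimum of $\log(1+e^u)+2e^{-2u}$ over $u\ge 0$ is about $1.57$, so your monotone-bracketing check does go through), combine it with $\log(1+e^{cw})=cw_+ + \log(1+e^{-c|w|})$, $w_+\le r(y,x,\theta)$, and for MLE with $-\log p_{\mathrm{TM}}(y|x;\theta)=\log(1+e^{-\Delta})$, then pass to the square trivially since the error lies in $[0,1]$. This is more elementary (no Pinsker, no KL) and in fact delivers the stronger un-squared bound $\mathbb{E}_{\mathcal{D}_{\mathrm{CM}}}[\mathbf{1}]\le \mathrm{RHS}$, from which the stated squared bound follows; its only cost is the small numerical/bracketing verification in the lemma, whereas the paper's route explains why the left-hand side is squared, where the constants come from, and carries the interpretive content that the TM's induced conditional distribution should approximate the CM's in KL. Your case (b) is essentially the paper's own algebraic identity; note that both you and the paper gloss over the tie event $r_1(x)=r_2(x)$, where the strict and non-strict indicators differ, so the identity as stated holds under the implicit assumption that ties are a null event.
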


 We next provide three $\delta$-interpretable methods.\eat{ Two are well-suited for ERM models while the last is well-suited for a maximum likelihood estimator (MLE) model.} The methods are motivated by the above theorem that offers bounds and reformulations of ERM and MLE generalization error. The function $f(\cdot)$ below is non-increasing on the domain $(0,\infty)$. In practice, $f(\cdot)$ can be any function that optimizes the margin of the given target model. The theorem above specifies candidate functions $f(\cdot)$ for both margin based $\delta$-interpretable methods.\\
\eat{
The three results of this theorem motivate three $\delta$-interpretable procedures, two for the ERM case and one for the MLE case. Each procedure relies on obtaining the confidence scores $p_{\mathrm{CM}}(y|x_i)$ for each sample $x_i$ of the training data. If a classifier does not output confidence scores we can obtain them using ideas from \cite{zadrozny2002transforming}. The three procedures are to respectively solve the following three problems:
}

\noindent\textbf{ERM case (a):}\\$ \min \limits_{\theta} \frac{1}{m} \left[ \sum_{i=1}^m \sum_{y\in \{+1,-1\}} c\cdot p_{\mathrm{CM}}(y|x_i)r(y,x_i,\mathbf{\theta})+ f(c|r_1(x_i)-r_2(x_i)|) \right]$ \\

\noindent\textbf{ERM case (b):}\\$   \min \limits_{\theta} \frac{1}{m} \left[ \sum_{i=1}^m \left|\frac{1}{2}- p_{\mathrm{CM}}(y'(x_i)|x_i) \right| r(y'(x_i),x_i,\mathbf{\theta}) \right]$\\

\noindent\textbf{MLE case:}\\$\min \limits_{\theta} \frac{1}{m} \left[ \sum_{i=1}^m \sum_{y\in \{+1,-1\}} -p_{CM}(y|x_i) \log p_{\mathrm{TM}}(y|x_i; \mathbf{\theta}) +f(|\log p_{\mathrm{TM}}(+1|x_i; \mathbf{\theta}) - \log p_{\mathrm{TM}}(-1|x_i; \mathbf{\theta}) |) \right]$\\

Note that there is a hyper-parameter $c>0$ that must be tuned for ERM case (a). The two ERM cases optimize the risk functionals while the MLE case directly optimizes the confidence of the TM.

\noindent\textbf{Remark:} In all the above methods, if we ignore the terms corresponding to margins (the $f$ function), primarily it involves weighing sample loss function by $p_{\mathrm{CM}}(y|x)$ or $p_{\mathrm{CM}}(y'|x)$ where $y$ is the label from the training data and $y'$ corresponds to the prediction of the complex model. The basic intuition is that if the complex model is unsure of the label (either the predicted one or the training label), this means $p_{\mathrm{CM}}$ is very small indicating high uncertainty. Therefore, the loss function of this sample is down weighted since this is difficult to classify even for the complex model. From the perspective of the classification boundary, $p_{\mathrm{CM}}$ being small indicates samples being close to the decision boundary and hence has a higher probability of cross over. We down weight these samples to force the target models to focus on easier to classify examples in the bulk of the decision boundary. $f$ functions just makes sure that the confidence in the predictions of the target model is as high as possible - a type of margin condition (as in SVMs). Imposing custom margin conditions can be difficult to implement with arbitrary training algorithms. Hence, one can drop the regularization due to $f$ terms and focus on the weighting just based on the respective functions of the confidence scores.

\eat{
Note that the ERM case (a) and MLE case suggest duplicating the training set for each label. However, the implementations of popular classification algorithms in standard packages are many times confused by this duplication. Hence, in practice, we suggest passing just one copy containing the predicted label with the corresponding weight when training the TM.}

\subsection{Evaluation on Simulated Data}
\label{simexp}
We next illustrate how the above $\delta$-interpretable procedures can be used to improve a simple target model. Simulated data, a target model $M_T$ and improved target model $M_T(I)$ are shown in figure \ref{fig:simulation} (left). Two classes (green circles and red diamonds) are uniformly sampled (1000 instances) from above and below the blue curve, so a linear model is clearly suboptimal. Label noise (5\%) was added primarily in the upper left corner.

We show here that a k-nearest neighbors (knn) classifier is $(\delta,\gamma)$-interpretable relative to a linear model w.r.t. the MLE interpretable procedure. The linear target model $M_T$ (dashed black line) is obtained by a logistic regression and achieves an accuracy of 0.766.  A k-nearest neighbors classifier achieving accuracy 0.856 was used to generate confidence scores via \cite{zadrozny2002transforming}, and solving the above problem for the MLE case results in the improved model $M_T(I)$ (solid red line) which achieves 0.782 accuracy. For the robustness test, 10\% of the labels were flipped, which resulted in $M_T$ accuracy falling to 0.71, while $M_T(I)$ accuracy fell to 0.722. Based on our definitions this implies that our MLE procedure is (0.931,1.071)-interpretable in this case.

Figure \ref{fig:simulation} (right) zooms in on a section of the left figure, exhibiting the benefit of the procedure: Several green circles misclassified by the target model $M_T$ are classified correctly by the reweighted model $M_T(I)$. 

\begin{figure}[t] \begin{center}
  \begin{tabular} {cc}
     \includegraphics[width=0.5\textwidth]{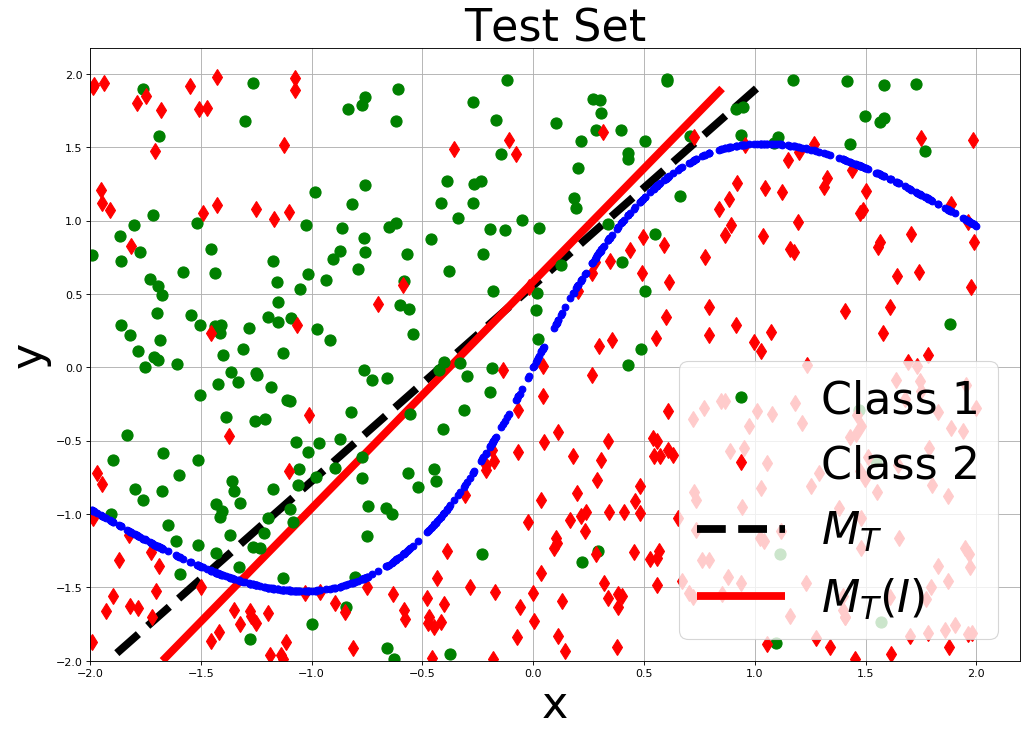}&
     \includegraphics[width=0.5\textwidth]{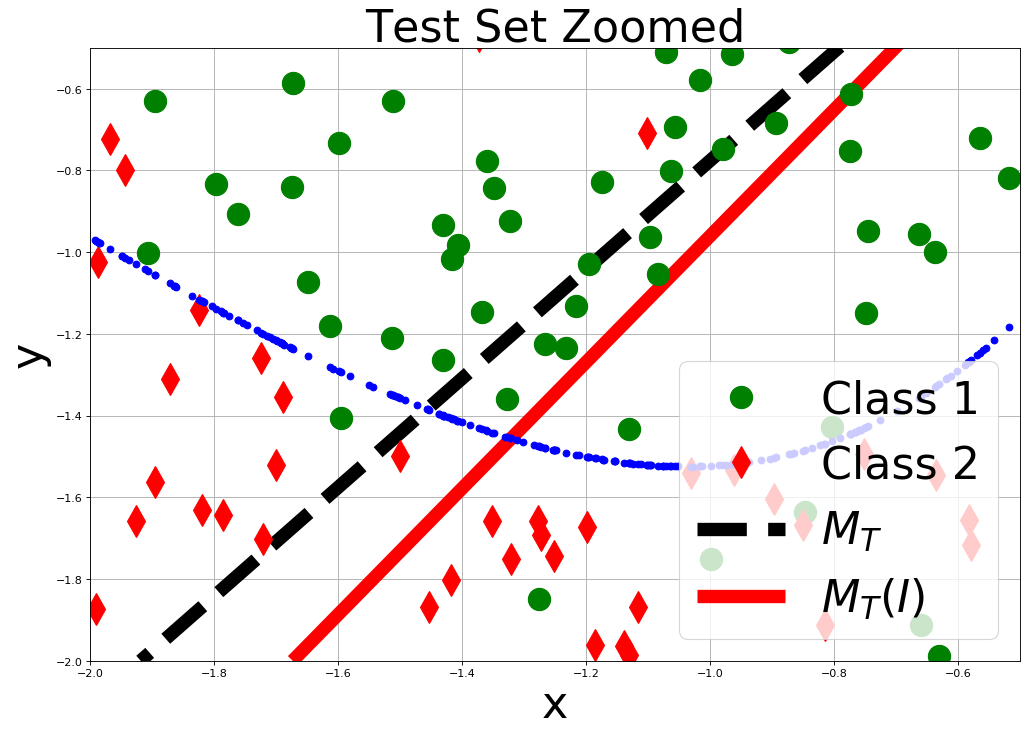}
  \end{tabular}
\caption{Illustration of $\delta$-interpretability on simulated data.} \label{fig:simulation}
\end{center} \end{figure}

\begin{figure}[t]
\centering
\includegraphics[width=0.6\textwidth]{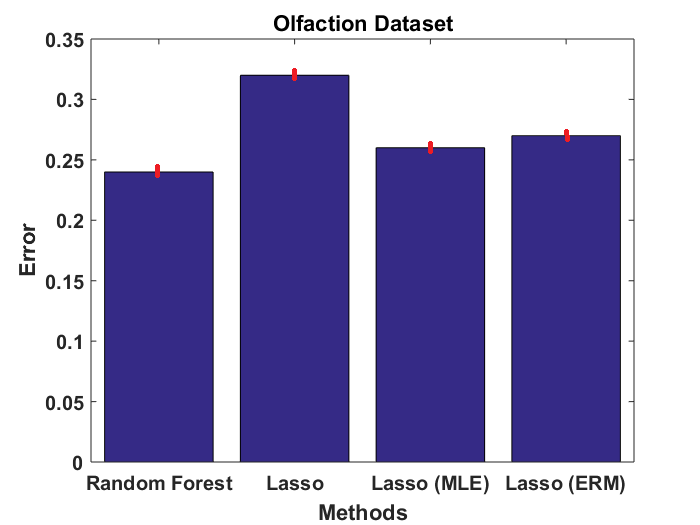}
\caption{Above we witness our MLE and ERM procedures being $\delta$-interpretable relative to a simple lasso model on (real) olfaction data.}
\label{fig:olfaction}
\end{figure}
\subsection{Evaluation on (Real) Olfaction Data}
\label{olf}
We evaluated our strategies on a recent publicly available olfaction dataset \cite{olfs} which has hundreds of molecules and thousands of chemoinformatic features along with qualitative perceptions averaged across almost 50 individuals. We chose \emph{Pleasantness}, which was one of the major percepts in this dataset as the target. The scale for this percept went from 0 to 100, where 0 meant that the odor was highly unpleasant while 100 meant that it was extremely pleasant. Hence, odors which were at 50 could be considered as neutral odors. For our binary classification setting we thus created two classes where class 1 was all odors with pleasantness $<50$ while class 2 was all odors with pleasantness $> 50$. We performed 10-fold cross-validation to obtain the results (mean with 95\% confidence interval) shown in figure \ref{fig:olfaction}.

We used a random forest (RF) model as our CM which had a test error of 0.24. Our TM was lasso which had a test error of 0.32. Using our MLE interpretable procedure the error of lasso dropped to 0.26. While using our ERM (b) strategy the error dropped to 0.27. This is depicted in figure \ref{fig:olfaction}. Hence, our MLE procedure was $(0.81, 0)$-interpretable, while our ERM (b) procedure was $(0.84, 0)$-interpretable. This illustrates a manner in which two interpretable procedures can be compared quantitatively, where in this example, the MLE procedure would be preferred.\\

\noindent\textbf{Human interpretable features highlighted:} 
An additional benefit of having a high performing (simple) lasso model is that important input features and their contribution can be readily highlighted to humans. The top features for our MLE model were R8v+, JGI7, R4p+, GGI9 and R6m+. When we shared this finding with experts they informed us that these features essentially characterized the shape and geometry of the molecule along with the global charge transfer characteristics within the molecule. Based on such insights, further studies are being carried out for this and other perceptions \cite{nlpsmell}.

\begin{figure}[t] \begin{flushleft}
  \begin{tabular} {cc}
     \includegraphics[width=0.45\textwidth]{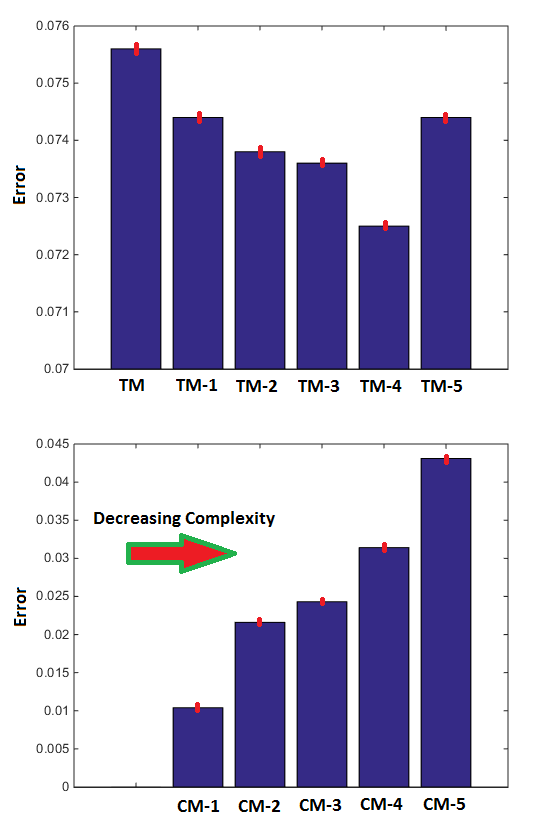}&
     \includegraphics[width=0.6\textwidth]{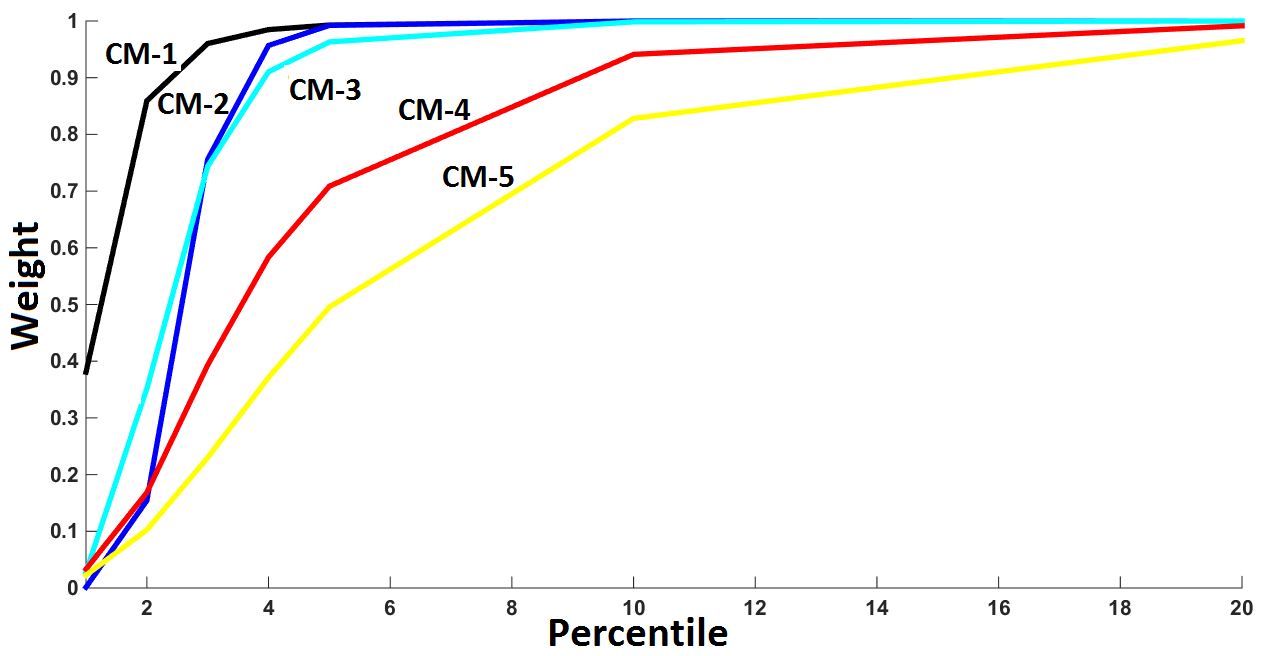}
  \end{tabular}
\caption{Above we see results on the MNIST dataset. In the above \emph{left} figure we see that the complex model CM-4 which is of intermediate complexity and performance produces the greatest in improvement in our TM (denoted by TM-4) given our interpretable strategy. The \emph{right} figure depicts the distribution of confidence scores for each CM used to weight the instances for training the TM.} \label{fig:mnist}
\end{flushleft} \end{figure}

\subsection{Evaluation on MNIST}

We now test the hypothesis if having a better complex model also implies that the $\delta$ will be lower for a given TM.

\subsubsection{Setup} 

We build 5 complex models of decreasing complexity. The complexity could be characterized by the number of parameters used to train the models. The most complex model CM-1 we use is given as a candidate to use on MNIST in keras \cite{kerasmnistCNN} which has around 1.2 million parameters and is a 4 layer network. CM-2 \cite{kerasmnistMLP} is of slightly lower complexity with around 670K parameters and is a 3 layer network. CM-3, CM-4 and CM-5 are 2 layer networks with 512, 64 and 32 rectified linear units respectively and a 10-way softmax layer. They have approximately 400K, 50K and 25K parameters. Our TM is a single layer network with just a softmax layer and has close to 8K parameters.

We split the MNIST training set randomly into two equal parts train1 and train2. We train our TM on train2. We then train the CMs on train1 and make predictions on train2. Using our MLE interpretable strategy we derive corresponding weights for instances in train2. We then train our TM using the corresponding weighted examples and obtain 5 corresponding versions of TM namely, TM-1 to TM-5. We then compute the error of all models i.e. CM-1,..., CM-5 and TM, TM-1, ..., TM-5 on the MNIST test set of 10K examples. We use train2 to train the TM so as to get better estimates of confidence scores from the CMs as opposed to trusting an overfitted model. Moreover, this also gives us better resolution of weights from the CMs, as most of them have confidence scores close to 1 for almost all the train1 instances. We do this for 10 random splits and the mean error with 95\% confidence intervals are reported.

\subsubsection{Observations}

We see in figure \ref{fig:mnist} (left) that the most complex CM which is CM-1, has the best test performance. The performance drops monotonically as the CMs become less complex. From the TMs perspective we see that all the CMs help in reducing its test error. However, TM-4 has the lowest error amongst the TMs, which corresponds to CM-4. Thus, even though CM-4 is not the best performing CM it is the best teacher for the TM given our interpretable strategy. Consequently, in our framework, CM-4 is (0.95, 0)-interpretable, while CM-1 is (0.98, 0)-interpretable, with others lying in between.

To see why this happens we plot the distribution of the weights that are obtained by each CM which is observed in figure \ref{fig:mnist} (right). We see that the complicated CM is so good that for almost 98\% of the instances it has a confidence score of $\sim$ 1. The distribution starts to become more spread out as the complexity of the CMs reduces.  

\subsubsection{Insight}

So what insight do the above observations convey. \emph{Given our interpretable strategies of weighting instances the improvement in the TM is a function of the performance of the CM and the diversity in its confidence scores. If the CM is so good that all its confidence scores are close to 1 then almost no new information is passed to the TM as the weighted training set is practically equivalent to the original unweighted one.}

This leads to the following qualitative insight.

\par\vspace{0.25cm}
\fbox{\begin{minipage}{4.5 in}\label{qlinsight} 
Having a teacher who is exceptional in an area may not be the best for the student as the teacher is not able to resolve what may be more difficult as opposed to less difficult and is thus unable to provide extra information that may give direction to help the student.
\end{minipage}}
\par\vspace{0.25cm}

Of course, all of the above is contingent on the interpretable strategy and there may be better ways to extract information from complex models such as CM-1. Nonetheless, we feel the above point is thought provoking.

\subsection{Evaluation on CIFAR-10}

We now conduct experiments on CIFAR-10, where the complex model is a 18 unit ResNet \cite{resnet} and we create four smaller models TM-1, ..., TM-4 which have 3, 5, 7 and 9 ResNet units respectively.

\subsubsection{Setup}

The complex model has $15$ Resnet units in sequence. The basic blocks each consist of two consecutive $3\times 3$ convolutional layers with either 64, 128, or 256 filters and our model has five of each of these units.\eat{(termed $\mathrm{Resunit:}$1-$\mathrm{x}$, $\mathrm{Resunit:}$2-$\mathrm{x}$, and $\mathrm{Resunit:}$3-$\mathrm{x}$, correspondingly, where $\mathrm{x}\in\{0,1,2,3,4\}$ because the complex model uses 5 of each type of block).} The first Resnet unit is preceded by an initial $3\times 3$ convolutional layer with $16$ filters. The last Resnet unit is succeeded by an average pooling layer followed by a fully connected layer producing $10$ logits, one for each class. Details of the $15$ Resnet units are given in Appendix B.

All target models have the same initial convolutional layer and finish with the same average pooling and fully connected layers as in the complex model above. We have four target models with 3, 5, 7, and 9 ResNet units. The approximate relative sizes of these models to the complex neural network are $1/5$, $1/3$, $1/2$, $2/3$, correspondingly. Further details can be found in Appendix B.

We split the available $50000$ training samples from the CIFAR-10 dataset into training set $1$ consisting of $30000$ examples and training set $2$ consisting of $20000$ examples. We split the $10000$ test set into a validation set of $500$ examples and a holdout test set of $9500$ examples. All final test accuracies of the simple models are reported with respect to this holdout test set. The validation set is used to tune all models and hyperparameters.

The complex model is trained on training set $1$.  We obtained a test accuracy of $0.845$ and keep this as our complex model. We note that although this is suboptimal with respect to Resnet performances of today, we have only used $30000$ samples to train. Each of the target models are trained only on training set $2$ consisting of $20000$ samples for $500$ epochs. All training hyperparameters are set to be the same as in the previous cases.  We train each target model in Table 4 (Appendix B) for the following different cases. We train a standard unweighted model. We then train using our MLE procedure. Distillation \cite{distill} trains the target models using cross-entropy loss with soft targets obtained from the softmax outputs of the complex model's last layer rescaled by temperature $t=0.5$ (tuned with cross-validation). More details along with results for different temperatures for distillation are given in the Appendix C.

The results are averaged over 4 runs and 95\% confidence intervals are provided.

\begin{table*}
\centering
 \begin{tabular}{|c|c|c|c|c|}
  \hline
    \hfill & TM-1 & TM-2 & TM-3 & TM-4 \\ 
    \hline
     Unweighted & 73.15($\pm$ 0.7) & 75.78($\pm$0.5) & 78.76($\pm$0.35) & 79.90($\pm$0.34) \\
     \hline
Weighted (MLE) & \textbf{76.27} ($\pm$0.48) & \textbf{78.54} ($\pm$0.36) & \textbf{81.46}($\pm$0.50) & \textbf{82.09} ($\pm$0.08) \\
\hline
Distillation & 65.84($\pm$0.60) & 70.09 ($\pm$0.19)
&73.4($\pm$0.64)& 77.30 ($\pm$0.16) \\
\hline
 \end{tabular}
 \caption{Averaged accuracies (\%) of target model trained with various methods. The complex model achieved $84.5 \%$ accuracy. In each case, the improvement over the unweighted model is about $2-3\%$ in test accuracy. Distillation performs uniformly worse in all cases.}
 \label{tab:acc}
\vspace{-0.5cm}
\end{table*}

\subsubsection{Observations}

In Table \ref{tab:acc}, we observe the performance of the various methods. We see that our weighting scheme improves the performance of the 4 TMs by roughly 2-3\% over standard unweighted training. This implies that our method corresponding to the 18 unit ResNet was $(0.88, 0)$-interpretable, $(0.88, 0)$-interpretable, $(0.87, 0)$-interpretable and $(0.89, 0)$-interpretable for TM-1 to TM-4 respectively.

Distillation performs uniformly worse, where the performance compared even with the original unweighted training is subpar leading to a $\delta > 1$. It seems from this that Distillation is not that effective when it comes to improving shallower or simpler models, rather is most effective when the depth of the original network is maintained with some thinning down of intermediate layers \cite{fitnet}.

\begin{figure}[t]
  \centering  
      \includegraphics[width=0.8\textwidth]{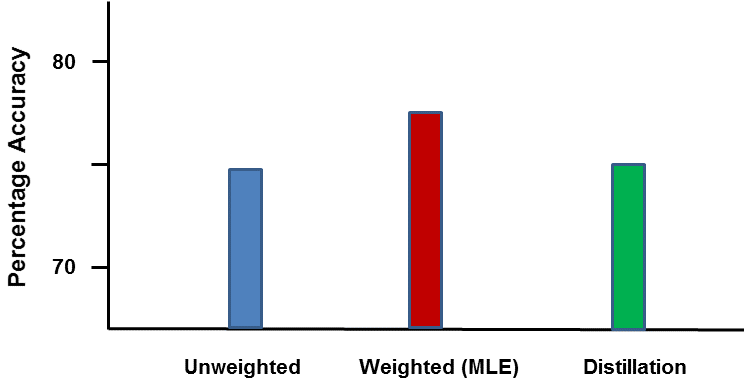}  
  \caption{Above we show the performance of the different methods on the manufacturing dataset.}
  \label{manuf}
\vspace{-0.5cm}
\end{figure}

\subsection{Evaluation on Manufacturing Data}

We now describe a real scenario from the semi-conductor manufacturing domain where a neural network was used to improve a decision tree CART model. The decision tree was the domain experts preference as it is something that was understandable to him.

\subsubsection{Setup}

We consider an etching process in a semi-conductor manufacturing plant. The goal is to predict the quantity of metal etched on each wafer -- which is a collection of chips -- without having to explicitly measure it using high precision tools, which are not only expensive but also substantially slow down the throughput of a fab. If $T$ denotes the required specification and $\gamma$ the allowed variation, the target we want to predict is quantized into three bins namely: $(-\infty,T-\gamma)$, $(T+\gamma,\infty)$ and within spec which is $T\pm\gamma$. We thus have a three class problem and the engineers goal is not only to predict these classes accurately but also to obtain insight into ways that he can improve his process.

For each wafer we have 5104 input measurements for this process. The inputs consist of acid concentrations, electrical readings, metal deposition amounts, time of etching, time since last cleaning, glass fogging and various gas flows and pressures. The number of wafers in our dataset was 100,023. Since these wafers were time ordered we split the dataset sequentially where the first 70\% was used for training and the most recent 30\% was used for testing. Sequential splitting is a very standard procedure used for testing models in this domain, as predicting on the most recent set of wafers is more indicative of the model performance in practice than through testing using random splits of train and test with procedures such as 10-fold cross validation.

\subsubsection{Observations}

We built a neural network (NN) with an input layer and five fully connected hidden layers of size 1024 each and a final softmax layer outputting the probabilities for the three classes. The NN had an accuracy of 91.2\%. The NN was, however, not the model of choice for the fab engineer who was 
more familiar and comfortable using decision trees.

Given this, we trained a CART based decision tree on the dataset. As seen in figure \ref{manuf}, its accuracy was 74.3\%. We wanted to see if we could improve this accuracy and build a more accurate decision tree. We thus applied our MLE method to weight the instances and then retrain CART. This bumped its accuracy to 77.1\% implying that our method was $(0.89, 0)$-interpretable. The best distilled CART produced a slight improvement in the base model increasing its accuracy to 75.6\%, which implies that it was $(0.95, 0)$-interpretable.

\begin{table*}[t]
\centering
 \begin{tabular}{|c|c|c|c|c|}
  \hline
    \hfill & CM (Xgboost) & TM-1 (Decision Tree)  & TM-2 (Naive Bayes) \\ 
    \hline
     Unweighted-Accuracy & 73.4 & 71.1& 66.9 \\
     \hline
    Weighted (MLE)-Accuracy & N/A & \textbf{71.6} & \textbf{67.8} \\
     \hline 
Unweighted- AUC & 80.1& 76.8 & 75.9 \\
\hline
Weighted (MLE)-AUC & N/A & \textbf{77.6} & \textbf{76.1} \\
\hline
 \end{tabular}
 \caption{Accuracy and ROC-AUC (\%) of two target models trained with various methods on the FICO-Challenge Dataset. The complex model achieved $73.4 \%$ accuracy. For each of the target models, the improvement over the unweighted model is about $0.5-1\%$ in test accuracy and ROC-AUC. The difference in accuracies between target models and complex models is about $2-3\%$. Even then, the improvement of weighted models over unweighted is roughly about $25 \%$ of the difference between complex and target model in the test metrics. }
 \label{tab:acc1}
\end{table*}

\subsection{Evaluation on FICO-Challenge Dataset}
 We now describe results on training a complex model (Xgboost classifier) and other interpretable target models (Naive Bayes Classifier and Decision Tree Classifier) on the FICO Challenge Dataset (also called the HELOC Dataset) \cite{FICO}.  
 
\subsubsection{SETUP}
   The dataset consists of anonymized applications for Home Equity Line of Credit (HELOC) from real home owners. HELOC is a specific type of line of credit offered by a bank which is a certain percentage of the difference between the current market value and purchase price of a home. The customers in the dataset have requested credit in the range $[\mathrm{USD~} 5000, \mathrm{USD~} 150000]$. The target variable to predict is `Risk Performance', which indicates whether the customer's payment was past 90 days due and/or was at a worse condition in the past 2 years. The Risk Performance is a binary variable - `Good' or `Bad'. 
   
\subsubsection{Feature Processing}   
The set of features has both categorical and real valued features. Categorical features were one hot encoded. Additionally, irrespective of the type of feature (real valued or categorical), three special values may appear in the dataset. $-9$ means that no credit history is found, and we removed these rows since they just contain a row of all $-9$s. $-7$ means ` Condition Not Met'; in other words, no delinquencies and/or no inquiries. $-8$ means that there are no usable inquiries or account trades. These special values may be informative about the target variable. Therefore, for every feature column we create three feature columns. If the value is $-7$, the three features created are set to $1,0$ and $0$. When the value is $-8$, it is set to $0,1,0$ and when the feature takes a non-special value (neither $-7$ nor $-8$), then the first two features are set to $0$ and the last one is set to the feature value. 

\subsubsection{Training}
We train three classifiers: a) Xgboost classifier which is our complex model (CM), b) Naive Bayes Classifier (TM-1), and c) Decision Tree Classifier (TM-2). The latter two classifiers are interpretable. Each classifier is trained using standard APIs from scikit-learn (version 0.20). We used $70\%$ of the samples for training and validation (5-fold cross-validation), and the remaining $30\%$ as a holdout test set.

After cross-validation, the best set of parameters found for Xgboost (CM) were the following: Learning rate: $0.05$, max depth: $3$, minimum child weight: $1$, number of boosting rounds: $300$, $\ell_1$ penalty: $5$, $\ell_2$ penalty: $10$. 

Two versions of the Decision Tree classifier were trained. One was the unweighted version and the other was the weighted version where each sample was weighted by the confidence score of the label in the training set according to the Xgboost classifier (CM). Similarly, two versions of Naive Bayes Classifier were trained. We note that the training procedure, in terms of hyper-parameter optimization, was identical for both weighted and unweighted versions of both target models.

\subsubsection{Observations}
The test accuracies and ROC-AUC values are given in Table \ref{tab:acc1}. The complex model (Xgboost) achieved $73.4 \%$ accuracy and an ROC-AUC of $80.1\%$ on the test dataset. For each target model, the improvement over the unweighted model is about $0.5-1\%$ in test accuracy and ROC-AUC. The difference in accuracies between target models and complex models is about $2-3\%$. This shows that, even for small differences in accuracies between target and complex models, the target accuracy can be improved. Xgboost is thus $(0.98,0)$-interpretable and $(0.97,0)$-interpretable with respect to Decision Tree  and Naive-Bayes, respectively, when considering test accuracies, while it is $(0.96,0)$-interpretable and $(0.99,0)$-interpretable when ROC-AUC is the metric.

\section{Related Work}

There has been a great deal of interest in interpretable modeling recently and for good reason. In almost any practical application with a human decision maker, interpretability is imperative for the human to have confidence in the model. It has also become increasingly important in deep neural networks given their susceptibility to small perturbations that are humanly unrecognizable \cite{carlini,gan}. 

There have been multiple frameworks and algorithms proposed to perform interpretable modeling. These range from building rule/decision lists \cite{decl,twl} to finding prototypes \cite{l2c} to taking inspiration from psychometrics \cite{irt} and learning models that can be consumed by humans \cite{Caruana:2015}. There are also works \cite{lime} which focus on answering instance specific user queries by locally approximating a superior performing complex model with a simpler easy-to-understand one which could be used to gain confidence in the complex model. Authors in \cite{bastani2017interpreting} propose an interpretable procedure, to transfer information from any classifier to a decision tree improving a baseline decision tree. However, the procedure is specific to axis-aligned decision trees as the target model. There is also recent work \cite{unifiedPI} which proposes a unified approach to create local model explanations with certain desirable properties that many current methods seem to lack. 

A recent survey \cite{montavon2017methods} looks at primarily two methods for neural network understanding: a) Methods \cite{nguyen2016synthesizing,nguyen2016multifaceted} that produce a prototype for a given class by optimizing the confidence score for the class subject to some regularization on the prototype (viz. constraints based on range space of a GAN), b) Explaining a neural net's decision on an image by highlighting relevant parts using a technique called Layer-wise relevance propagation \cite{bach2015pixel}. This technique starts from the last layer and progressively assigns weights to neurons of layers below connected to a single neuron on a layer above satisfying some weight conservation properties across layers. We observe that type (b) methods are local model explanations on a specific image while type (a) methods are more global producing prototypes for a given class. Other works also investigate methods of the type (b) discussed above for vision \cite{selvaraju2016grad} and NLP applications \cite{lei2016rationalizing}. These methods also fall within our framework.

A very relevant work to our current endeavor is possibly \cite{rsi}. They provide an in depth discussion for why interpretability is needed, and an overall taxonomy for interpretability. The primary focus is on direct human interpretability. We on the other hand, use a TM, which could be a human, to define our notion of interpretability making it more quantifiable/measurable.

A recent position paper \cite{lipton2016mythos} lists several desirable properties of interpretable methods. Notably, the author also emphasizes the need for a proper formalism for notion of interpretability and quantifying methods based on these formalisms. We have done precisely that through our formalism for $\delta$-interpretability.

Our interpretable procedures based on using confidence measures are related to distillation and learning with privileged information \cite{priv16}. The key difference is in the way we use information. We weight training instances by the confidence score of the training label alone. This approach, unlike Distillation \cite{distill}, is applicable in broader settings like when target models are classifiers optimized using empirical risk (e.g., SVM) where risk could be any loss function.\eat{
In contrast, Distillation is used when the target model is a neural network that is trained with a cross-entropy penalty on the vector of confidence scores of the complex model. However, this is not feasible when target models are classifiers that are optimized using the empirical risk (like in SVM etc.) where risk could be any loss function.
} Moreover, weighting instances has an intuitive justification where if you view the complex model as a teacher and the TM as a student, the teacher is telling the student which easier aspects (e.g. instances) he/she should focus on and which he/she could ignore. There have been other strategies \cite{modelcompr,modelcompr2,bastani2017interpreting} to transfer information from bigger models to smaller ones, however, they are all similar in spirit to Distillation, where the complex models predictions are used to train a simpler model.

Weighting examples to improve models has been used before, although their general setup and motivation is different, for instance curriculum learning (CL) \cite{curriculumL} and boosting \cite{boost}. CL is a training strategy where first easy examples are given to a learner followed by more complex ones. The determination of what is simple as opposed to complex is typically done by a human. There is usually no automatic gradation of examples that occurs based on a machine. Also sometimes the complexity of the learner is increased considerably during the training process so that it can accurately model more complex phenomena. In our case however, the complexity of the target model is (more or less) fixed given applications in interpretability. Moreover, we are searching for just one set of weights which when applied to the original input (not some intermediate learned representations) the fixed simple model trained on it gives the best possible performance. Boosting is even more remotely related to our setup. In boosting there is no high performing teacher and one generally grows an ensemble of weak learners which as just mentioned is not reasonable in our setting. In addition here too, there are multiple sets of weights where each learner is trained on a different set.

Other relevant literatures where information is transfered between two or more entities can be witnessed in multi-agent coordination \cite{multiagent} and machine teaching \cite{machteach}. In multi-agent coordination, the goal typically is to drive the independent agents towards a consensus through efficient message passing given the lack of global information available to each of them. The main objective is not necessarily to improve each agents performance, rather to make them coordinate to as to solve a common problem in a distributed environment. In machine teaching, the goal is to teach a student model a certain concept that the teacher is quite familiar with. One of the main formalisms here is to convey this concept using the least number of examples. An implicit assumption in many of the cases here is that the student and teacher are working in the same parameter space and the student has the ability to learn the optimal concept. In our setting, there is no specific concept that we wish to convey, rather we have proposed a metric to quantify interpretability and our candidate methods provide a way to transfer information to a target model in a soft sense (eg. weighting) with no hard requirement of minimizing the number of examples. Moreover, the student and the teacher can have significantly different hypothesis classes. As long as the information is consumable by the target model with impact measured using our definitions the corresponding procedure would be interpretable with sparsity not explicitly enforced, although, it may be important for consumption for certain target models (viz. humans).

\section{Discussion}

In this paper we provided a formal framework to characterize interpretability. Using this framework we were able to quantify the performance of many state-of-the-art interpretable procedures. We also proposed our own for the supervised setting that are based on strong theoretical grounding.

Our experiments led to the insight that having the best performing complex model is not necessarily the best in terms of improving a TM. In other words, it seems important to characterize the relative complexity of a (CM, TM) pair for useful information transfer. Trying to characterize this is part of future work. Of course, all of this is relative to the interpretable strategies that one can come up with. Hence, in the future we also want to design better interpretable strategies for more diverse settings.

From an information theoretic point of view, our work motivates the following two kinds of capacity notions:
a) What is the least number of additional bits per training sample required for the TM to improve its performance by $\delta$? These additional bits would reduce the uncertainty in the confidence score about a target label than what is implied by the training data.
b) What is the maximum number of additional bits per training sample that can be obtained from the CM towards reducing the uncertainty of the confidence scores of the TM? Based on these two questions, it may be possible to say that when the second capacity exceeds the first capacity, then a $\delta$ improvement is possible. We intend to investigate this in the future.

We defined $\delta$ for a single distribution but it could be defined over multiple distributions where $\delta = \max(\delta_1,...,\delta_k)$ for $k$ distributions and analogously $\gamma$ also could be defined over multiple adversarial distributions. We did not include these complexities in the definitions so as not to lose the main point, but extensions such as these are straightforward.

Another extension could be to have a sensitivity parameter $\alpha$ to define equivalence classes, where if two models are $\delta_1$- and $\delta_2$-interpretable, then they are in the same equivalence class if $|\delta_1-\delta_2|\le \alpha$. This can help group together models that can be considered to be equivalent for the application at hand. The $\alpha$ in essence quantifies operational significance. One can have even multiple $\alpha$ as a function of the $\delta$ values.

Regarding complexity, one can also extend the notion of interpretability where $\delta$ and/or $\gamma$ are 1 but you can learn the same model with fewer samples given information from the interpretable procedure. In essence, have sample complexity also as a parameter in the definition. Furthermore, Feldman~\cite{feldman2000minimization} finds that the subjective difficulty of a concept is directly proportional to its Boolean complexity, the length of the shortest logically equivalent propositional formula. We could explore interpretable models of this type. Yet another model bounds the rademacher complexity of humans \cite{zhu2009human} as a function of complexity of the domain and sample size. Although the bounds are loose, they follow the empirical trend seen in their experiments on words and images.

\eat{
One can also extend the notion of interpretability where $\delta$ and/or $\gamma$ are 1 but you can learn the same model with fewer samples given information from the interpretable procedure. In essence, have sample complexity also as a parameter in the definition.

Human subjects store approximately 7 pieces of information \cite{lisman1995storage}. As such, we can explore highly interpretable models, which can be readily learned by humans, by considering models for TM that make simple use of no more than 7 pieces of information.

Feldman~\cite{feldman2000minimization} finds that the subjective difficulty of a concept is directly proportional to its Boolean complexity, the length of the shortest logically equivalent propositional formula. We could explore interpretable models of this type. 

Another model bounds the rademacher complexity of humans \cite{zhu2009human} as a function of complexity of the domain and sample size. Although the bounds are loose, they follow the empirical trend seen in their experiments on words and images.
}

We lastly discuss human subjects, which are known store approximately 7 pieces of information \cite{sevenmiller, lisman1995storage}. As such, we can explore highly interpretable models, which can be readily learned by humans, by considering models for TM that make simple use of no more than 7 pieces of information. Finally, all humans may not be equal relative to a task. Having expertise in a domain may increase the level of detail consumable by that human. So the above models which try to approximate human capability may be extended to account for the additional complexity consumable by the human depending on their experience.

\eat{\section{(Model Agnostic) Interpretable Methods and Analysis}
In this section, we provide a candidate interpretable method that uses the complex model as a black box and modifies the training data set that the target model (TM) trains on. We provide theoretical justification of our proposed way of modifying the training data set. Note although the method is described for binary classification it easily extends to multiclass settings.

We assume that any model for a given binary classification task is essentially a conditional probability distribution function $p(y|x),~ y \in \{+1.-1\}$. All classifiers are assumed to follow the following classification rule: $\argmax \limits_{y \in \{+1,-1\}} p(y|x)$. Let us denote the conditional probability distribution function for the complex model as $p_{\mathrm{CM}}(y|x)$. Let us denote the conditional probability distribution function for the target model with parameter $\mathbf{\theta}$ as $p_{\mathrm{TM}} (y|x; \mathbf{\theta})$. Not all binary-classifier training involves maximizing the likelihood. Therefore, we discuss two different classes of binary classifiers based on their training methods.

\subsection{Risk Minimization models and pseudo-confidence scores}
Suppose the target model is optimized according to empirical risk minimization on $m$ training samples using the risk function $r(y,x,\mathbf{\theta})$, i.e.
\begin{align}
   \min \limits_{\theta} \frac{1}{m} \sum_{i=1}^m r(y_i,x_i,\mathbf{\theta}) 
\end{align}
Let us assume that $0 \leq r(\cdot) \leq 1$. Let the shorthand notation $r_1(x)$ denote $r(+1,x,\mathbf{\theta})$ while $r_2(x)$ denote $r(-1,x,\mathbf{\theta})$. Define the conditional probability distribution on the target model based on the risk function as follows:
\begin{align}
 p_{\mathrm{TM}} (+1|x;\mathbf{\theta})= \frac{e^{-cr_1(x)}}{e^{-cr_1(x)}+e^{-cr_2(x)}} 
\end{align}
for some constant $c>0$. Please note that, no matter what $c$ is, the behavior of the classifier on actual data depends on whether $r_1(x)>r_2(x)$ or not. Therefore for all $c>0$, this is equivalent to checking if $ p_{\mathrm{TM}} (+1|x;\mathbf{\theta}) > 1/2$ or not. In fact, the behavior of the error term at the LHS of  (\ref{errordecomp}) depends only on whether $r_1(x)>r_2(x)$ or not and this is independent of the choice of $c$. So here, we don't attach any real notion of confidence score to $p_{\mathrm{TM}}(\cdot)$ defined as above. They can be considered to be pseudo-confidence scores implied by the risk function $r(\cdot)$ for the sake of analysis. So any risk function on the target model endows it a pseudo-confidence score.

\subsection{Maximum Likelihood Estimation Models}
In these cases, the binary classifier is specified directly by a conditional probability distribution $p_{\mathrm{TM}}(y|x; \mathbf{\theta})$. Given $m$ training samples $(y_i,x_i)$, the following optimization is performed:
 \begin{align}
     \min \limits_{\theta} \frac{1}{m} \sum_{i=1}^m - \log p_{\mathrm{TM}}(y_i|x_i; \mathbf{\theta})
 \end{align}

\subsection{Algorithms}
Now, we provide two interpretable methods. One is well-suited for a maximum likelihood estimator model while the other is well-suited for an empirical risk minimizer model.

\textbf{Interpretable Method 1 (ERM case)}: 
 \begin{enumerate}
   \item Consider the trained complex model $\mathrm{CM}$. Obtain the confidence scores $p_{\mathrm{CM}}(y|x_i)$ for the decision $y$ on sample $x_i$ from the training data.\footnote{It may not be straightforward to obtain confidence scores for a given classifier. However,there are methods such as \cite{zadrozny2002transforming} that obtain confidence scores from classifier outputs. For a simple theoretical treatment, we assume that these confidence scores are perfect and available.}
   \item Create two weighted training samples as follows: a) $<x_i,+1>$ with weight $c*p_{\mathrm{CM}}(+1|x_i)$ and b) $<x_i,-1>$ with weight $c*p_{\mathrm{CM}}(-1|x_i)$ for a constant $c>0$. The parameter $c$ is a hyper-parameter that is chose by cross validation.
   \item Optimize the following:
    \begin{align}
       & \min \limits_{\theta} \frac{1}{m} \left[ \sum_{i=1}^m c(p_{\mathrm{CM}}(+1|x_i))r(+1,x_i,\mathbf{\theta})+c(p_{\mathrm{CM}}(-1|x_i))r(-1,x_i,\mathbf{\theta}) \right. \nonumber\\
      & \left. + f(c|r_1(x_i)-r_2(x_i)|) \right]
    \end{align}
  Here $f(\cdot)$ is a non-increasing function on the domain $(0,\infty)$. We provide an explicit $f(\cdot)$ at  the end of the next subsection. However, in practice $f(\cdot)$ can be taken to be anything that optimized the margin for the given target model.   
 \end{enumerate}
 
\textbf{Interpretable Method 2 (MLE case) }: 
       We only note the difference from the first method. In step $2$, for every sample $x_i$ in the training data set, one creates two samples as follows: a) $<x_i,+1>$ with weight $p_{CM}(+1|x_i)$ and b) $<x_i,-1>$ with weight $p_{CM}(-1|x_i)$. In step $3$, a maximum likelihood is estimation is performed directly on the confidence scores $p_{\mathrm{TM}}(y|x ;\mathbf{\theta})$ with a regularizer on the margin as follows:
  \begin{align}
     & \min \limits_{\theta} \frac{1}{m} \left[ \sum_{i=1}^m  -p_{CM}(+1|x_i) \log p_{\mathrm{TM}}(+1|x_i; \mathbf{\theta}) -p _{CM}(-1|x_i) \log p_{\mathrm{TM}}(-1|x_i; \mathbf{\theta}) \right. \nonumber \\
     & \left. +f(|\log p_{\mathrm{TM}}(+1|x_i; \mathbf{\theta}) - \log p_{\mathrm{TM}}(-1|x_i; \mathbf{\theta}) |) \right]
 \end{align}
 Again, here $f(\cdot)$ is a non-increasing function. We give a candidate $f(\cdot)$ in our analysis in the next subsection. However, in practice we apply standard regularizers for margins.
\subsection{Main Result} 
Let us assume that the complex model $\mathrm{CM}$ is sufficiently rich and well-trained that it is quite close to the actual true model. For simplicity, we assume that the data in the training data set (and the test data) is generated according to the distribution ${\cal D}_{\mathrm{CM}} = p(x) p_{\mathrm{CM}} (y|x)$. 

We first note the following: The best average classification error one can obtain if the data is distributed according to ${\cal D}_{\mathrm{CM}}$ is exactly $\mathbb{E}_{{\cal D}_{\mathrm{CM}}}[ \mathbf{1}_{p_{\mathrm{CM}}(y|x)<=1/2} ]$. This is because, even if the classifier knows the correct distribution, given that the output is either $+1,-1$, there will be some error due to this quantization. This is because the metric of validation is average mis-classification error.

We wish to find the optimum $\mathbf{\theta}$ that minimizes the classification error of the target model assuming that the samples arise from ${\cal D}_{\mathrm{CM}}$. 

 Based on the above observation, we split this error into two terms:
\begin{align}\label{errordecomp}
 \mathbb{E}_{{\cal D}_{\mathrm{CM}}}[ \mathbf{1}_{p_{\mathrm{TM}}(y|x;\mathbf{\theta})<=1/2} ] &= \mathbb{E}_{{\cal D}_{\mathrm{CM}}}[ \mathbf{1}_{p_{\mathrm{TM}}(y|x;\mathbf{\theta})<=1/2} ] - \mathbb{E}_{{\cal D}_{\mathrm{TM},\mathbf{\theta}}}[ \mathbf{1}_{p_{\mathrm{TM}}(y|x;\mathbf{\theta})<=1/2} ]  \nonumber \\
 \hfill & + \mathbb{E}_{{\cal D}_{\mathrm{TM},\mathbf{\theta}}}[ \mathbf{1}_{p_{\mathrm{TM}}(y|x;\mathbf{\theta})<=1/2} ] 
\end{align}
The second term is the residual error of the perfect classifier on samples drawn according to the distribution defined by the target model, i.e. ${\cal D}_{\mathrm{TM},\mathbf{\theta}} = p(x) p_{\mathrm{TM}} (y|x;\mathbf{\theta})$. We have the following theorem that bounds the squared error in both cases:
\begin{theorem}\label{mainthm}
  \textbf{ERM case:} 
 \begin{align}
 (\mathbb{E}_{{\cal D}_{\mathrm{CM}}}[ \mathbf{1}_{p_{\mathrm{TM}}(y|x;\mathbf{\theta})<=1/2} ])^2 & \leq \mathbb{E}_{p(x)} \left[(p_{\mathrm{CM}}(+1|x)) c r_1(x) + (p_{\mathrm{CM}}(-1|x)) c r_2(x) \right. \nonumber \\
 \hfill & \left. + \log (1 +e^{-c \lvert r_1(x) - r_2(x) \rvert}) + 2e^{-2c\lvert r_1(x) - r_2(x) \rvert}\right]
 \end{align}
 \textbf{MLE case:}
 \begin{align}
  (\mathbb{E}_{{\cal D}_{\mathrm{CM}}}[ \mathbf{1}_{p_{\mathrm{TM}}(y|x;\mathbf{\theta})<=1/2} ])^2 & \leq \mathbb{E}_{p(x)} \left[ -(p_{\mathrm{CM}}(+1|x)) \log( p_{\mathrm{TM}}(+1|x;\mathbf{\theta} )) \right.  \nonumber \\
 \hfill & \left. -p_{\mathrm{CM}}(-1|x)\log( p_{\mathrm{TM}}(-1|x;\mathbf{\theta} )) + 2e^{-2\lvert \log p_{\mathrm{TM}}(-1|x;\mathbf{\theta}) - \log p_{\mathrm{TM}}(+1|x;\mathbf{\theta})  \rvert} \right]
 \end{align}
\end{theorem}

The first term in the above theorem is either the weighted risk minimization problem or the weighted maximum likelihood estimation problem.  The second term penalizes the margin of the TM classifier in both cases. Therefore, the above theorem specifies a candidate $f(\cdot)$ for both interpretable methods. 

We can see the error of the TM is a sum of two terms - one that is optimized by weighted ERM or MLE and the other is a regularization of the margin of the classifier involved.

\subsection{Analysis: Proof of Theorem \ref{mainthm}}
\subsubsection{Bounding the first difference term}
Now, we bound the first difference term in (\ref{errordecomp}) as follows:
\begin{theorem}
\begin{align} 
 \mathbb{E}_{{\cal D}_{\mathrm{CM}}}[ \mathbf{1}_{p_{\mathrm{TM}}(y|x;\mathbf{\theta})<=1/2} ] - \mathbb{E}_{{\cal D}_{\mathrm{TM},\mathbf{\theta}}}[ \mathbf{1}_{p_{\mathrm{TM}}(y|x;\mathbf{\theta})<=1/2} ] \leq \sqrt{\frac{1}{2} \mathrm{KL}( p_{\mathrm{CM}}(y|x) \lVert p_{\mathrm{TM}}(y|x;\mathbf{\theta} ) )} 
 \end{align}
\end{theorem}
\begin{proof}
Let $d_{\mathrm{TV}}(p,q)$ be the total variation distance between two distributions $p$ and $q$. We have the following simple chain of inequalities: 
 \begin{align}
  \mathbb{E}_{{\cal D}_{\mathrm{CM}}}[ \mathbf{1}_{p_{\mathrm{TM}}(y|x;\mathbf{\theta})<=1/2} ] - \mathbb{E}_{{\cal D}_{\mathrm{TM},\mathbf{\theta}}}[ \mathbf{1}_{p_{\mathrm{TM}}(y|x;\mathbf{\theta})<=1/2} ] & \overset{a}\leq d_{\mathrm{TV}} ({\cal D}_{\mathrm{TM},\mathbf{\theta}},{\cal D}_{\mathrm{CM}}) \nonumber \\
  \hfill & \overset{b} \leq \sqrt{\frac{1}{2} \mathrm{KL}( {\cal D}_{\mathrm{CM}} \lVert {\cal D}_{\mathrm{TM},\mathbf{\theta}} )} \nonumber \\
    \hfill &= \sqrt{\frac{1}{2} \mathrm{KL}( p_{\mathrm{CM}}(y|x) \lVert p_{\mathrm{TM}}(y|x;\mathbf{\theta} ) )} 
 \end{align}
 (a)- follows from the definition of total variation distance. (b) follows from Pinsker's inequality connecting KL-divergence and total variation distance. This completes the proof.
\end{proof}

\begin{theorem}

 \textbf{MLE case:}
 \begin{align}
 \mathrm{KL}( p_{\mathrm{CM}}(y|x) \lVert p_{\mathrm{TM}}(y|x;\mathbf{\theta} ) )  & \leq  \mathbb{E}_{p(x)} [ -(p_{\mathrm{CM}}(+1|x)) \log( p_{\mathrm{TM}}(+1|x;\mathbf{\theta} )) ] + \nonumber \\
 \hfill & \mathbb{E}_{p(x)}[-p_{\mathrm{CM}}(-1|x)\log( p_{\mathrm{TM}}(-1|x;\mathbf{\theta} ))]
 \end{align}
 \textbf{ERM case:}
 \begin{align}
  \mathrm{KL}( p_{\mathrm{CM}}(y|x) \lVert p_{\mathrm{TM}}(y|x;\mathbf{\theta} ) )  & \leq  \mathbb{E}_{p(x)} [(p_{\mathrm{CM}}(+1|x)) c r_1(x) + (p_{\mathrm{CM}}(-1|x)) c r_2(x)]  \nonumber \\
 \hfill & + \mathbb{E}_{p(x)} [\log (1 +e^{-c \lvert r_1(x) - r_2(x) \rvert}) ]
 \end{align}
\end{theorem}
\begin{proof}
 We have the following chain of inequalities:
 
 \begin{align}
   \mathrm{KL}( p_{\mathrm{CM}}(y|x) \lVert p_{\mathrm{TM}}(y|x;\mathbf{\theta} ) ) &= \mathbb{E}_{{\cal D}_{\mathrm{CM}}}[\log p_{\mathrm{CM}}(y|x)]  +  \mathbb{E}_{p(x)} [-p_{\mathrm{CM}}(+1|x) \log( p_{\mathrm{TM}}(+1|x;\mathbf{\theta} )) \nonumber \\ 
 \hfill &  - p_{\mathrm{CM}}(-1|x) \log( p_{\mathrm{TM}}(+1|x;\mathbf{\theta} )) ] \nonumber \\  
      & \overset{a} \leq   \mathbb{E}_{p(x)} [-p_{\mathrm{CM}}(+1|x) \log( p_{\mathrm{TM}}(+1|x;\mathbf{\theta} )) \nonumber \\ 
 \hfill &  - p_{\mathrm{CM}}(-1|x) \log( p_{\mathrm{TM}}(+1|x;\mathbf{\theta} )) ]
 \end{align}
 (a)- This is because $\log (p_{\mathrm{CM}}(\cdot) ) \leq 0$. This proves the result for the MLE case.For the ERM case, we further bound using risk functions.
 
 \begin{align}
   \mathrm{KL}( p_{\mathrm{CM}}(y|x) \lVert p_{\mathrm{TM}}(y|x;\mathbf{\theta} ) ) &= \leq   \mathbb{E}_{p(x)} [-p_{\mathrm{CM}}(+1|x) \log( p_{\mathrm{TM}}(+1|x;\mathbf{\theta} )) \nonumber \\ 
 \hfill &  - p_{\mathrm{CM}}(-1|x) \log( p_{\mathrm{TM}}(+1|x;\mathbf{\theta} )) ] \nonumber \\
 \hfill & =   \mathbb{E}_{p(x)} [p_{\mathrm{CM}}(+1|x) c r_1(x)] + \mathbb{E}_{p(x)}[ p_{\mathrm{CM}}(-1|x) c r_2(x)]  \nonumber \\
  \hfill & + \mathbb{E}_{p(x)} [\log(e^{-cr_1(x)}+e^{-cr_2(x)})] \nonumber \\
  \hfill &\leq \mathbb{E}_{p(x)} [(p_{\mathrm{CM}}(+1|x)) c r_1(x)] +  \mathbb{E}_{p(x)} [(p_{\mathrm{CM}}(-1|x)) c r_2(x) \nonumber \\
 \hfill & -c \min (r_1(x),r_2(x))+ \log (1 +e^{-c \lvert r_1(x) - r_2(x) \rvert}) ] \nonumber \\
  \hfill &\leq \mathbb{E}_{p(x)} [(p_{\mathrm{CM}}(+1|x)) c r_1(x)] +  \mathbb{E}_{p(x)} [(p_{\mathrm{CM}}(-1|x)) c r_2(x) \nonumber \\
 \hfill & + \log (1 +e^{-c \lvert r_1(x) - r_2(x) \rvert}) ]
  \end{align}
 The last inequality proves the ERM part of the theorem. 
\end{proof}

\subsubsection{Bounding the second term}
\textbf{ERM case:}
The second term in (\ref{errordecomp}) can be expressed as follows:
\begin{align}
 \mathbb{E}_{{\cal D}_{\mathrm{TM},\mathbf{\theta}}}[ \mathbf{1}_{p_{\mathrm{TM}}(y|x;\mathbf{\theta})<=1/2} ] &= 
 \mathbb{E}_{p(x)}\left[ \min(\frac{e^{cr_1(x)}}{e^{cr_1(x)}+e^{cr_2(x)}},\frac{e^{cr_2(x)}}{e^{cr_1(x)}+e^{cr_2(x)}}) \right] \nonumber \\
 \hfill &= \mathbb{E}_{p(x)} \left[ \frac{1}{1+e^{c\lvert r_1(x)-r_2(x)\rvert}} \right] \leq \mathbb{E}_{p(x)} [e^{-c\lvert r_1(x) - r_2(x) \rvert}]
\end{align}

\textbf{MLE case:} We bound the second term in (\ref{errordecomp}) as follows:
\begin{align}
 \mathbb{E}_{{\cal D}_{\mathrm{TM},\mathbf{\theta}}}[ \mathbf{1}_{p_{\mathrm{TM}}(y|x;\mathbf{\theta})<=1/2} ] &= 
 \mathbb{E}_{p(x)}\left[ \min(p_{\mathrm{TM}}(+1|x;\mathbf{\theta}),p_{\mathrm{TM}}(-1|x;\mathbf{\theta})) \right] \nonumber \\
 \hfill & \leq \mathbb{E}_{p(x)} \left[ \frac{\min(p_{\mathrm{TM}}(+1|x;\mathbf{\theta}),p_{\mathrm{TM}}(-1|x;\mathbf{\theta}))}{\max(p_{\mathrm{TM}}(+1|x;\mathbf{\theta}),p_{\mathrm{TM}}(-1|x;\mathbf{\theta}))} \right] \nonumber \\
 & \leq \mathbb{E}_{p(x)} [e^{-\lvert \log p_{\mathrm{TM}}(-1|x;\mathbf{\theta}) - \log p_{\mathrm{TM}}(+1|x;\mathbf{\theta}) \rvert}]
\end{align}

\subsubsection{Bounding the error term: Putting it together}

Therefore, we put everything together minimize the following upper bound on the square of the TM error with respect to the CM model as a function of $\mathbf{\theta}$.

\begin{proof}[Proof of Theorem \ref{mainthm}]

\textbf{ERM case:}
From (\ref{errordecomp}), we have:
\begin{align}
 (\mathbb{E}_{{\cal D}_{\mathrm{CM}}}[ \mathbf{1}_{p_{\mathrm{TM}}(y|x;\mathbf{\theta})<=1/2} ])^2 &\leq 2 (\mathbb{E}_{{\cal D}_{\mathrm{CM}}}[ \mathbf{1}_{p_{\mathrm{TM}}(y|x;\mathbf{\theta})<=1/2} ] - \mathbb{E}_{{\cal D}_{\mathrm{TM},\mathbf{\theta}}}[ \mathbf{1}_{p_{\mathrm{TM}}(y|x;\mathbf{\theta})<=1/2} ])^2  \nonumber \\
 \hfill & + 2(\mathbb{E}_{{\cal D}_{\mathrm{TM},\mathbf{\theta}}}[ \mathbf{1}_{p_{\mathrm{TM}}(y|x;\mathbf{\theta})<=1/2} ])^2 \nonumber \\
    & \leq \left[ \mathbb{E}_{p(x)} [(p_{\mathrm{CM}}(+1|x)) c r_1(x) + (p_{\mathrm{CM}}(-1|x)) c r_2(x)] \right. \nonumber \\
 \hfill & \left. + \mathbb{E}_{p(x)} [\log (1 +e^{-c \lvert r_1(x) - r_2(x) \rvert}) ] \right] + 2 (\mathbb{E}_{p(x)} [e^{-c\lvert r_1(x) - r_2(x) \rvert}])^2 \nonumber \\
  \hfill & \overset{a} \leq  \mathbb{E}_{p(x)} \left[(p_{\mathrm{CM}}(+1|x)) c r_1(x) + (p_{\mathrm{CM}}(-1|x)) c r_2(x) \right. \nonumber \\
 \hfill & \left. + \log (1 +e^{-c \lvert r_1(x) - r_2(x) \rvert}) + 2e^{-2c\lvert r_1(x) - r_2(x) \rvert}\right]
\end{align}
(a) - Jensen's inequality on the convex function $x^2$. Similarly, one can show the result for the MLE case.
\end{proof}

}

\section*{Acknowledgements}
We would like to thank Margareta Ackerman, Murray Campbell, Alexandra Olteanu, Marek Petrik, Irina Rish, Kush Varshney, Mark Wegman and Bowen Zhou for their suggestions. We would also like to thank the anonymous reviewers for their insightful comments.

\appendix
\section{Analysis: Proof of Theorem \ref{mainthm}}
We assume that any model for a given binary classification task is essentially a conditional probability distribution function $p(y|x),~ y \in \{+1.-1\}$. All classifiers are assumed to follow the following classification rule: $\argmax \{p(y|x):y \in \{+1,-1\}\}$. Let us denote the conditional probability distribution function for the complex model as $p_{\mathrm{CM}}(y|x)$. Let us denote the conditional probability distribution function for the target model with parameter $\mathbf{\theta}$ as $p_{\mathrm{TM}} (y|x; \mathbf{\theta})$.We unify the treatment through the lens of conditional probability scores. So one must define an explicit conditional probability score for the risk minimization models. We do so in the following subsection.

\subsection{Risk Minimization models and pseudo-confidence scores}
Suppose the target model is optimized according to empirical risk minimization on $m$ training samples using the risk function $r(y,x,\mathbf{\theta})$, i.e.
\begin{align}
   \min \limits_{\theta} \frac{1}{m} \sum_{i=1}^m r(y_i,x_i,\mathbf{\theta}) 
\end{align}
Let us assume that $0 \leq r(\cdot) \leq 1$. Let the shorthand notation $r_1(x)$ denote $r(+1,x,\mathbf{\theta})$ while $r_2(x)$ denote $r(-1,x,\mathbf{\theta})$. Define the conditional probability distribution on the target model based on the risk function as follows:
\begin{align}
 p_{\mathrm{TM}} (+1|x;\mathbf{\theta})= \frac{e^{-cr_1(x)}}{e^{-cr_1(x)}+e^{-cr_2(x)}} 
\end{align}
for some constant $c>0$. Please note that, no matter what $c$ is, the behavior of the classifier on actual data depends on whether $r_1(x)>r_2(x)$ or not. Therefore for all $c>0$, this is equivalent to checking if $ p_{\mathrm{TM}} (+1|x;\mathbf{\theta}) > 1/2$ or not. In fact, the behavior of the error term at the LHS of  (\ref{errordecomp}) depends only on whether $r_1(x)>r_2(x)$ or not and this is independent of the choice of $c$. So here, we don't attach any real notion of confidence score to $p_{\mathrm{TM}}(\cdot)$ defined as above. They can be considered to be pseudo-confidence scores implied by the risk function $r(\cdot)$ for the sake of analysis. So any risk function on the target model endows it a pseudo-confidence score.
\eat{
\subsection{Maximum Likelihood Estimation Models}
In these cases, the binary classifier is specified directly by a conditional probability distribution $p_{\mathrm{TM}}(y|x; \mathbf{\theta})$. Given $m$ training samples $(y_i,x_i)$, the following optimization is performed:
 \begin{align}
     \min \limits_{\theta} \frac{1}{m} \sum_{i=1}^m - \log p_{\mathrm{TM}}(y_i|x_i; \mathbf{\theta})
 \end{align}
}
\subsection{Error Term}
We will always treat an ERM case as an MLE case endowed with  pseudo-confidence scores. We first note the following: The best GE one can obtain if the data is distributed according to ${\cal D}_{\mathrm{CM}}$ is exactly $\mathbb{E}_{{\cal D}_{\mathrm{CM}}}[ \mathbf{1}_{p_{\mathrm{CM}}(y|x)<=1/2} ]$. This is because, even if the classifier knows the correct distribution, given that the output is either $+1,-1$, there will be some error due to this quantization. We wish to find the optimum $\mathbf{\theta}$ that minimizes the classification error of the target model assuming that the samples arise from ${\cal D}_{\mathrm{CM}}$. Based on the above observation, we split this error into two terms:
\begin{align}\label{errordecomp}
 \mathbb{E}_{{\cal D}_{\mathrm{CM}}}[ \mathbf{1}_{p_{\mathrm{TM}}(y|x;\mathbf{\theta})<=1/2} ] &= \mathbb{E}_{{\cal D}_{\mathrm{CM}}}[ \mathbf{1}_{p_{\mathrm{TM}}(y|x;\mathbf{\theta})<=1/2} ] - \mathbb{E}_{{\cal D}_{\mathrm{TM},\mathbf{\theta}}}[ \mathbf{1}_{p_{\mathrm{TM}}(y|x;\mathbf{\theta})<=1/2} ]  \nonumber \\
 \hfill & + \mathbb{E}_{{\cal D}_{\mathrm{TM},\mathbf{\theta}}}[ \mathbf{1}_{p_{\mathrm{TM}}(y|x;\mathbf{\theta})<=1/2} ] 
\end{align}
The second term is the residual error of the perfect classifier on samples drawn according to the distribution defined by the target model, i.e. ${\cal D}_{\mathrm{TM},\mathbf{\theta}} = p(x) p_{\mathrm{TM}} (y|x;\mathbf{\theta})$. 

\subsection{Bounding the first difference term}
Now, we bound the first difference term in (\ref{errordecomp}) as follows:
\begin{theorem}
\begin{align} 
 \mathbb{E}_{{\cal D}_{\mathrm{CM}}}[ \mathbf{1}_{p_{\mathrm{TM}}(y|x;\mathbf{\theta})<=1/2} ] - \mathbb{E}_{{\cal D}_{\mathrm{TM},\mathbf{\theta}}}[ \mathbf{1}_{p_{\mathrm{TM}}(y|x;\mathbf{\theta})<=1/2} ] \leq \sqrt{\frac{1}{2} \mathrm{KL}( p_{\mathrm{CM}}(y|x) \lVert p_{\mathrm{TM}}(y|x;\mathbf{\theta} ) )} 
 \end{align}
\end{theorem}
\begin{proof}
Let $d_{\mathrm{TV}}(p,q)$ be the total variation distance between two distributions $p$ and $q$. We have the following simple chain of inequalities: 
 \begin{align}
  \mathbb{E}_{{\cal D}_{\mathrm{CM}}}[ \mathbf{1}_{p_{\mathrm{TM}}(y|x;\mathbf{\theta})<=1/2} ] - \mathbb{E}_{{\cal D}_{\mathrm{TM},\mathbf{\theta}}}[ \mathbf{1}_{p_{\mathrm{TM}}(y|x;\mathbf{\theta})<=1/2} ] & \overset{a}\leq d_{\mathrm{TV}} ({\cal D}_{\mathrm{TM},\mathbf{\theta}},{\cal D}_{\mathrm{CM}}) \nonumber \\
  \hfill & \overset{b} \leq \sqrt{\frac{1}{2} \mathrm{KL}( {\cal D}_{\mathrm{CM}} \lVert {\cal D}_{\mathrm{TM},\mathbf{\theta}} )} \nonumber \\
    \hfill &= \sqrt{\frac{1}{2} \mathrm{KL}( p_{\mathrm{CM}}(y|x) \lVert p_{\mathrm{TM}}(y|x;\mathbf{\theta} ) )} 
 \end{align}
 (a)- follows from the definition of total variation distance. (b) follows from Pinsker's inequality connecting KL-divergence and total variation distance. This completes the proof.
\end{proof}

\begin{theorem}

 \textbf{MLE case:}
 \begin{align}
 \mathrm{KL}( p_{\mathrm{CM}}(y|x) \lVert p_{\mathrm{TM}}(y|x;\mathbf{\theta} ) )  & \leq  \mathbb{E}_{p(x)} [ -(p_{\mathrm{CM}}(+1|x)) \log( p_{\mathrm{TM}}(+1|x;\mathbf{\theta} )) ] + \nonumber \\
 \hfill & \mathbb{E}_{p(x)}[-p_{\mathrm{CM}}(-1|x)\log( p_{\mathrm{TM}}(-1|x;\mathbf{\theta} ))]
 \end{align}
 \textbf{ERM case (a):}
 \begin{align}
  \mathrm{KL}( p_{\mathrm{CM}}(y|x) \lVert p_{\mathrm{TM}}(y|x;\mathbf{\theta} ) )  & \leq  \mathbb{E}_{p(x)} [(p_{\mathrm{CM}}(+1|x)) c r_1(x) + (p_{\mathrm{CM}}(-1|x)) c r_2(x)]  \nonumber \\
 \hfill & + \mathbb{E}_{p(x)} [\log (1 +e^{-c \lvert r_1(x) - r_2(x) \rvert}) ]
 \end{align}
\end{theorem}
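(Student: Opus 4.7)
The plan is to treat both cases through the standard decomposition $\mathrm{KL}(p \,\|\, q) = -H(p) + H(p,q)$, where $H(p,q) = \mathbb{E}_{p}[-\log q]$ is the cross-entropy, and then exploit the nonnegativity of the entropy of a discrete distribution. Explicitly, I would write
\begin{align*}
\mathrm{KL}(p_{\mathrm{CM}}(y|x) \,\|\, p_{\mathrm{TM}}(y|x;\theta)) = \mathbb{E}_{\mathcal{D}_{\mathrm{CM}}}[\log p_{\mathrm{CM}}(y|x)] - \mathbb{E}_{\mathcal{D}_{\mathrm{CM}}}[\log p_{\mathrm{TM}}(y|x;\theta)],
\end{align*}
and then use that $\log p_{\mathrm{CM}}(y|x) \leq 0$ pointwise (so the first term is nonpositive) to drop it, yielding the MLE bound by expanding the remaining expectation over $y \in \{+1,-1\}$ conditioned on $x$ and taking the outer expectation over $p(x)$.

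For the ERM case, the plan is to substitute the pseudo-confidence form $p_{\mathrm{TM}}(y|x;\theta) = e^{-c r(y,x,\theta)}/(e^{-c r_1(x)} + e^{-c r_2(x)})$ into the cross-entropy upper bound just derived. This produces
\begin{align*}
-\log p_{\mathrm{TM}}(y|x;\theta) = c\, r(y,x,\theta) + \log\bigl(e^{-c r_1(x)} + e^{-c r_2(x)}\bigr),
\end{align*}
so the expectation of the first piece under $p_{\mathrm{CM}}(y|x)$ contributes exactly $c\,p_{\mathrm{CM}}(+1|x)\,r_1(x) + c\,p_{\mathrm{CM}}(-1|x)\,r_2(x)$. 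The second piece does not depend on $y$ and passes through the expectation over $y$ unchanged.

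The main obstacle, and the one step that needs actual work, is to upper-bound the log-sum-exp term by $\log(1 + e^{-c|r_1(x)-r_2(x)|})$. I would use the identity
\begin{align*}
\log\bigl(e^{-c r_1(x)} + e^{-c r_2(x)}\bigr) = -c\min(r_1(x), r_2(x)) + \log\bigl(1 + e^{-c|r_1(x) - r_2(x)|}\bigr),
\end{align*}
and then observe that $r_1(x), r_2(x) \in [0,1]$ by the assumption $0 \leq r(\cdot) \leq 1$, so $-c\min(r_1(x),r_2(x)) \leq 0$ and can be dropped. Taking the outer expectation over $p(x)$ and combining terms then yields the ERM (a) bound. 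The rest is routine: linearity of expectation, the fact that $p_{\mathrm{CM}}(+1|x) + p_{\mathrm{CM}}(-1|x) = 1$, and grouping the two resulting expectations in the form displayed by the theorem. No sophisticated tools beyond nonnegativity of entropy and the log-sum-exp identity are needed.
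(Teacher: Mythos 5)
Your proposal is correct and follows essentially the same route as the paper's own proof: expand the KL divergence into the (nonpositive) term $\mathbb{E}_{{\cal D}_{\mathrm{CM}}}[\log p_{\mathrm{CM}}(y|x)]$ plus the cross-entropy, drop the former to get the MLE bound, then for ERM (a) substitute the pseudo-confidence scores to obtain the weighted risk terms plus $\log(e^{-cr_1(x)}+e^{-cr_2(x)})$, and use the identity $\log(e^{-cr_1}+e^{-cr_2}) = -c\min(r_1,r_2)+\log(1+e^{-c|r_1-r_2|})$ together with $r(\cdot)\ge 0$ to discard the nonpositive term. No substantive difference from the paper's argument.
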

\begin{proof}
 We have the following chain of inequalities:
 
 \begin{align}
   \mathrm{KL}( p_{\mathrm{CM}}(y|x) \lVert p_{\mathrm{TM}}(y|x;\mathbf{\theta} ) ) &= \mathbb{E}_{{\cal D}_{\mathrm{CM}}}[\log p_{\mathrm{CM}}(y|x)]  +  \mathbb{E}_{p(x)} [-p_{\mathrm{CM}}(+1|x) \log( p_{\mathrm{TM}}(+1|x;\mathbf{\theta} )) \nonumber \\ 
 \hfill &  - p_{\mathrm{CM}}(-1|x) \log( p_{\mathrm{TM}}(+1|x;\mathbf{\theta} )) ] \nonumber \\  
      & \overset{a} \leq   \mathbb{E}_{p(x)} [-p_{\mathrm{CM}}(+1|x) \log( p_{\mathrm{TM}}(+1|x;\mathbf{\theta} )) \nonumber \\ 
 \hfill &  - p_{\mathrm{CM}}(-1|x) \log( p_{\mathrm{TM}}(+1|x;\mathbf{\theta} )) ]
 \end{align}
 (a)- This is because $\log (p_{\mathrm{CM}}(\cdot) ) \leq 0$. This proves the result for the MLE case.For the ERM case, we further bound using risk functions.
 
 \begin{align}
   \mathrm{KL}( p_{\mathrm{CM}}(y|x) \lVert p_{\mathrm{TM}}(y|x;\mathbf{\theta} ) ) &= \leq   \mathbb{E}_{p(x)} [-p_{\mathrm{CM}}(+1|x) \log( p_{\mathrm{TM}}(+1|x;\mathbf{\theta} )) \nonumber \\ 
 \hfill &  - p_{\mathrm{CM}}(-1|x) \log( p_{\mathrm{TM}}(+1|x;\mathbf{\theta} )) ] \nonumber \\
 \hfill & =   \mathbb{E}_{p(x)} [p_{\mathrm{CM}}(+1|x) c r_1(x)] + \mathbb{E}_{p(x)}[ p_{\mathrm{CM}}(-1|x) c r_2(x)]  \nonumber \\
  \hfill & + \mathbb{E}_{p(x)} [\log(e^{-cr_1(x)}+e^{-cr_2(x)})] \nonumber \\
  \hfill &\leq \mathbb{E}_{p(x)} [(p_{\mathrm{CM}}(+1|x)) c r_1(x)] +  \mathbb{E}_{p(x)} [(p_{\mathrm{CM}}(-1|x)) c r_2(x) \nonumber \\
 \hfill & -c \min (r_1(x),r_2(x))+ \log (1 +e^{-c \lvert r_1(x) - r_2(x) \rvert}) ] \nonumber \\
  \hfill &\leq \mathbb{E}_{p(x)} [(p_{\mathrm{CM}}(+1|x)) c r_1(x)] +  \mathbb{E}_{p(x)} [(p_{\mathrm{CM}}(-1|x)) c r_2(x) \nonumber \\
 \hfill & + \log (1 +e^{-c \lvert r_1(x) - r_2(x) \rvert}) ]
  \end{align}
 The last inequality proves the ERM part of the theorem. 
\end{proof}

\subsection{Bounding the second term}
\textbf{ERM case (a):}
The second term in (\ref{errordecomp}) can be expressed as follows:
\begin{align}
 \mathbb{E}_{{\cal D}_{\mathrm{TM},\mathbf{\theta}}}[ \mathbf{1}_{p_{\mathrm{TM}}(y|x;\mathbf{\theta})<=1/2} ] &= 
 \mathbb{E}_{p(x)}\left[ \min(\frac{e^{cr_1(x)}}{e^{cr_1(x)}+e^{cr_2(x)}},\frac{e^{cr_2(x)}}{e^{cr_1(x)}+e^{cr_2(x)}}) \right] \nonumber \\
 \hfill &= \mathbb{E}_{p(x)} \left[ \frac{1}{1+e^{c\lvert r_1(x)-r_2(x)\rvert}} \right] \leq \mathbb{E}_{p(x)} [e^{-c\lvert r_1(x) - r_2(x) \rvert}]
\end{align}

\textbf{MLE case:} We bound the second term in (\ref{errordecomp}) as follows:
\begin{align}
 \mathbb{E}_{{\cal D}_{\mathrm{TM},\mathbf{\theta}}}[ \mathbf{1}_{p_{\mathrm{TM}}(y|x;\mathbf{\theta})<=1/2} ] &= 
 \mathbb{E}_{p(x)}\left[ \min(p_{\mathrm{TM}}(+1|x;\mathbf{\theta}),p_{\mathrm{TM}}(-1|x;\mathbf{\theta})) \right] \nonumber \\
 \hfill & \leq \mathbb{E}_{p(x)} \left[ \frac{\min(p_{\mathrm{TM}}(+1|x;\mathbf{\theta}),p_{\mathrm{TM}}(-1|x;\mathbf{\theta}))}{\max(p_{\mathrm{TM}}(+1|x;\mathbf{\theta}),p_{\mathrm{TM}}(-1|x;\mathbf{\theta}))} \right] \nonumber \\
 & \leq \mathbb{E}_{p(x)} [e^{-\lvert \log p_{\mathrm{TM}}(-1|x;\mathbf{\theta}) - \log p_{\mathrm{TM}}(+1|x;\mathbf{\theta}) \rvert}]
\end{align}

\subsection{Bounding the error term: Putting it together}

Therefore, we put everything together minimize the following upper bound on the square of the TM error with respect to the CM model as a function of $\mathbf{\theta}$.

\begin{proof}[Proof of Theorem \ref{mainthm}]

\textbf{ERM case (a):}
From (\ref{errordecomp}), we have:
\begin{align}
 (\mathbb{E}_{{\cal D}_{\mathrm{CM}}}[ \mathbf{1}_{p_{\mathrm{TM}}(y|x;\mathbf{\theta})<=1/2} ])^2 &\leq 2 (\mathbb{E}_{{\cal D}_{\mathrm{CM}}}[ \mathbf{1}_{p_{\mathrm{TM}}(y|x;\mathbf{\theta})<=1/2} ] - \mathbb{E}_{{\cal D}_{\mathrm{TM},\mathbf{\theta}}}[ \mathbf{1}_{p_{\mathrm{TM}}(y|x;\mathbf{\theta})<=1/2} ])^2  \nonumber \\
 \hfill & + 2(\mathbb{E}_{{\cal D}_{\mathrm{TM},\mathbf{\theta}}}[ \mathbf{1}_{p_{\mathrm{TM}}(y|x;\mathbf{\theta})<=1/2} ])^2 \nonumber \\
    & \leq \left[ \mathbb{E}_{p(x)} [(p_{\mathrm{CM}}(+1|x)) c r_1(x) + (p_{\mathrm{CM}}(-1|x)) c r_2(x)] \right. \nonumber \\
 \hfill & \left. + \mathbb{E}_{p(x)} [\log (1 +e^{-c \lvert r_1(x) - r_2(x) \rvert}) ] \right] + 2 (\mathbb{E}_{p(x)} [e^{-c\lvert r_1(x) - r_2(x) \rvert}])^2 \nonumber \\
  \hfill & \overset{a} \leq  \mathbb{E}_{p(x)} \left[(p_{\mathrm{CM}}(+1|x)) c r_1(x) + (p_{\mathrm{CM}}(-1|x)) c r_2(x) \right. \nonumber \\
 \hfill & \left. + \log (1 +e^{-c \lvert r_1(x) - r_2(x) \rvert}) + 2e^{-2c\lvert r_1(x) - r_2(x) \rvert}\right]
\end{align}
(a) - Jensen's inequality on the convex function $x^2$. Similarly, one can show the result for the MLE case.
\end{proof}


For ERM case (b), we provide the following analysis of the error of the target model assuming that the data is drawn according to the distribution ${\cal D}_{\mathrm{CM}}=p(x) p_{\mathrm{CM}}(y|x)$. Let $y'(x)= \argmax \limits_{y} p_{\mathrm{CM}}(y|x)$. Consider the the error of the target model in the ERM case:

\begin{align}\label{errorbound}
\mathbb{E}_{{\cal D}_{\mathrm{CM}}}[\mathbf{1}_{r(y,x,\theta)>r(-y,x,\theta)}] &= \mathbb{E}_{x} \left[ \left[\frac{1}{2}+ \left| \frac{1}{2}-p_{\mathrm{CM}}(y'(x)|x) \right| \right]\cdot\mathbf{1}_{r(y'(x),x,\theta)> r(-y'(x),x,\theta)}+ \right. \nonumber \\ \hfill & \left. \left[\frac{1}{2}- \left| \frac{1}{2}-p_{\mathrm{CM}}(y'(x)|x) \right| \right]\cdot\mathbf{1}_{r(-y'(x),x,\theta)\geq r(y'(x),x,\theta)} \right] \nonumber \\
\hfill &=\mathbb{E}_{x} \left[ \left[\frac{1}{2}+ \left| \frac{1}{2}-p_{\mathrm{CM}}(y'(x)|x) \right| \right]\cdot\mathbf{1}_{r(y'(x),x,\theta)> r(-y'(x),x,\theta)}+ \right. \nonumber \\ \hfill & \left. \left[\frac{1}{2}- \left| \frac{1}{2}-p_{\mathrm{CM}}(y'(x)|x) \right| \right]\cdot(1-\mathbf{1}_{r(y'(x),x,\theta)> r(-y'(x),x,\theta)}) \right] \nonumber \\
\hfill &= \mathbb{E}_{x} \left[ 2\left| \frac{1}{2}-p_{\mathrm{CM}}(y'(x)|x) \right| \cdot\mathbf{1}_{r(y'(x),x,\theta)> r(-y'(x),x,\theta)}\right]+  \nonumber \\ \hfill & \mathbb{E}_x \left[\frac{1}{2}- \left| \frac{1}{2}-p_{\mathrm{CM}}(y'(x)|x) \right| \right]
\end{align}

During normal training, only the sequence of $y'(x)$ is given as a training label for the sample $x$ to the target model. However, a complex model can inform the target model of more information, i.e. $p_{\mathrm{CM}}(y'(x)|x)$ (confidence of the complex model over the training labels). The second term in the right hand side of (\ref{errorbound}) is independent of the choice of $\theta$. This motivates an algorithm to minimize the first term in (\ref{errorbound}). This motivates the following heuristic:

Solve: 
   \begin{align}
   \min \limits_{\theta} \frac{1}{m} \left[ \sum_{i=1}^m \left|\frac{1}{2}- p_{\mathrm{CM}}(y'(x_i)|x_i) \right| r(y'(x_i),x_i,\mathbf{\theta}) \right]
   \end{align}
The above heuristic is motivated by the fact that normal training of a target model (through expected risk minimization) amounts to optimizing $\mathbb{E}_{x}\left[\mathbf{1}_{r(y'(x),x,\theta)> r(-y'(x),x,\theta)} \right]$. 

For the MLE model, replace $r(\cdot)$ by the negative log-likelihood to get an equivalent of the above heuristic.

\section{ResNet Unit Architecture}

The below tables 3 and 4 illustrate the architecture of the complex and target models.

\begin{table}[h!]
 \begin{center}    
    \begin{tabular}{|l|c|} 
    \hline 
      \textbf{Units} & \textbf{Description} \\
      \hline
 Init-conv & $\left[ \begin{array}{c}
 3 \times 3~\mathrm{conv},~16 
\end{array} \right] $\\
\hline
      Resunit:1-0 & $\left[ \begin{array}{c}
 3\times 3~\mathrm{conv},~64 \\
 3\times 3~\mathrm{conv},~64
\end{array}\right]$  \\
 \hline
    (Resunit:1-x)$\times$ 4 & $\left[ \begin{array}{c}
 3\times 3~\mathrm{conv},~64 \\
 3\times 3~\mathrm{conv},~64
\end{array}\right] \times 4$ \\
\hline
      (Resunit:2-0)& $\left[ \begin{array}{c}
 3\times 3~\mathrm{conv},~128 \\
 3\times 3~\mathrm{conv},~128
\end{array}\right] $ \\
 \hline 
   (Resunit:2-x)$\times$ 4& $\left[ \begin{array}{c}
 3\times 3~\mathrm{conv},~128 \\
 3\times 3~\mathrm{conv},~128
\end{array}\right] \times 4 $ \\
   \hline 
       (Resunit:3-0)& $\left[ \begin{array}{c}
 3\times 3~\mathrm{conv},~256 \\
 3\times 3~\mathrm{conv},~256
\end{array}\right] $ \\
\hline
      (Resunit:3-x)$\times$ 4& $\left[ \begin{array}{c}
 3\times 3~\mathrm{conv},~256 \\
 3\times 3~\mathrm{conv},~256
\end{array}\right] \times 4 $ \\
\hline 
    \multicolumn{2}{|c|}{Average Pool}  \\
 \hline 
     \multicolumn{2}{|c|}{Fully Connected - 10 logits} \\
  \hline   
    \end{tabular}
     \end{center}
  \label{tab:cm}
\caption{$18$ unit Complex Model with $15$ ResNet units in CIFAR-10 experiments.}
\end{table}

\begin{table}[h!]
\label{tab:sm}
  \begin{center}    
    \begin{tabular}{|l|c|c|} 
    \hline 
      \textbf{Target Models} & \textbf{Additional Resunits}& \textbf{Rel. Size} \\
      \hline
 TM-1 & None & $\approx$ 1/5\\
\hline
      TM-2 & (Resunit:1-x)$\times 1$ & $\approx$ 1/3\\ & (Resunit:2-x)$\times 1$ & \\
 \hline
    TM-3 & (Resunit:1-x)$\times 2$ &\\
    & (Resunit:2-x)$\times 1$ & $\approx$ 1/2\\
    & (Resunit:3-x)$\times 1$ &\\
\hline
      TM-4 & (Resunit:1-x)$\times 2$ & \\
    & (Resunit:2-x)$\times 2$ & $\approx$ 2/3\\
    & (Resunit:3-x)$\times 2$ &\\
 \hline 
    \end{tabular}
     \caption{Additional Resnet units in the target models apart from the commonly shared ones. The last column shows the approximate size of the target models relative to the complex neural network model in the previous table.}   
  \end{center}
\end{table}

\section{Distillation}
We train the target models using a cross entropy loss with soft targets. Soft targets are obtained from the softmax outputs of the last layer of the complex model (or equivalently the last linear probe) rescaled by temperature $t$ as in distillation of \cite{distill}. By using cross validation, we picked the temperature that performed best on the validation set in terms of validation accuracy for the simple models. We cross-validated over temperatures from the set $\{0.5,3,10.5,20.5,30.5,40.5,50\}$.
See Figures \ref{Fig:distill_val} and \ref{tab:distill} for validation and test accuracies for TM-4 model with distillation at different temperatures.

\begin{figure*}[htbp]
\parbox{0.75\linewidth}{
\centering
\includegraphics[width=9cm]{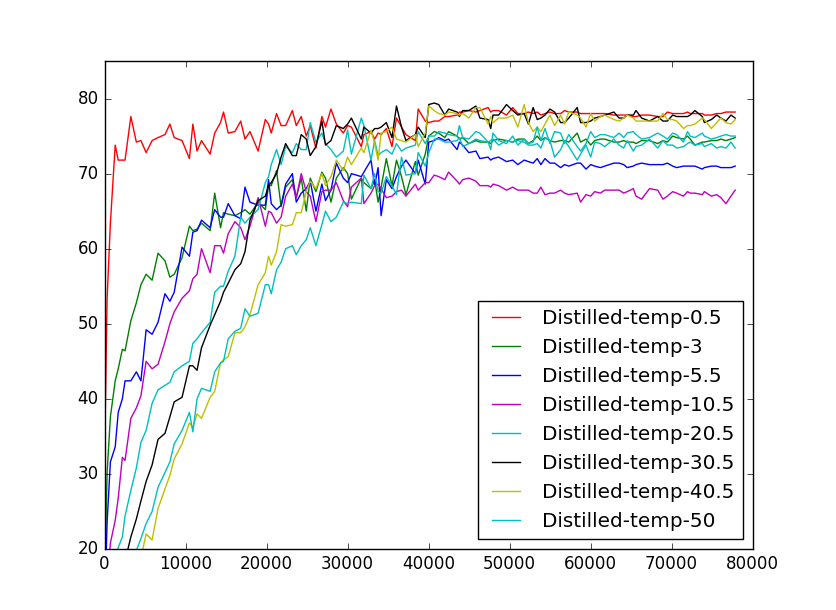}
\caption{Plot of validation set accuracy as a function of training steps for TM-4. The training is done using distillation. Validation accuracies for different temperatures used in distillation are plotted.}
\label{Fig:distill_val}}
\end{figure*}
\begin{figure*}[htbp]
\parbox{.75\linewidth}{
\centering
\begin{tabular}{|c|c|}
\hline
 Distillation Temperatures & Test Accuracy of TM-4 \\
 \hline
 0.5 & 0.7719999990965191 \\
 3.0 & 0.709789470622414 \\
 5.0 & 0.7148421093037254 \\
 10.5 & 0.6798947390757109 \\
 20.5 & 0.7237894786031622 \\
 30.5 & 0.7505263184246264 \\
 40.5 & 0.7513684191201863 \\
 50 & 0.7268421022515548 \\
 \hline
\end{tabular}
\caption{Test Set accuracies of various versions of target model TM-4 trained using distilled final layer confidence scores at various temperatures. The top two are for temperatures $0.5$ and $40.5$. }
\label{tab:distill}
}
\end{figure*}

\bibliographystyle{plain}
\bibliography{FFInterpret} 

\begin{thebibliography}{10}

\bibitem{charubook}
C.~Aggarwal and C.~Reddy.
\newblock {\em Data Clustering: Algorithms and Applications}.
\newblock CRC Press, 2013.

\bibitem{modelcompr2}
Lei~Jimmy Ba and Rich Caurana.
\newblock Do deep nets really need to be deep?
\newblock {\em CoRR}, abs/1312.6184, 2013.

\bibitem{bach2015pixel}
Sebastian Bach, Alexander Binder, Gr{\'e}goire Montavon, Frederick Klauschen,
  Klaus-Robert M{\"u}ller, and Wojciech Samek.
\newblock On pixel-wise explanations for non-linear classifier decisions by
  layer-wise relevance propagation.
\newblock {\em PloS one}, 10(7):e0130140, 2015.

\bibitem{bastani2017interpreting}
Osbert Bastani, Carolyn Kim, and Hamsa Bastani.
\newblock Interpreting blackbox models via model extraction.
\newblock {\em arXiv preprint arXiv:1705.08504}, 2017.

\bibitem{curriculumL}
Yoshua Bengio, J{\'e}r\^{o}me Louradour, Ronan Collobert, and Jason Weston.
\newblock Curriculum learning.
\newblock In {\em Proceedings of the 26th Annual International Conference on
  Machine Learning}, 2009.

\bibitem{sproto}
J.~Bien and R.~Tibshirani.
\newblock {Prototype Selection for Interpretable Classification}.
\newblock {\em Ann. Appl. Stat.}, pages 2403--2424, 2011.

\bibitem{modelcompr}
Cristian Bucilu\v{a}, Rich Caruana, and Alexandru Niculescu-Mizil.
\newblock Model compression.
\newblock In {\em Proceedings of the 12th ACM SIGKDD International Conference
  on Knowledge Discovery and Data Mining}, 2006.

\bibitem{multiagent}
Yongcan Cao, Wenwu Yu, Wei Ren, and Guanrong Chen.
\newblock {An Overview of Recent Progress in the Study of Distributed
  Multi-agent Coordination}.
\newblock In {\em https://arxiv.org/abs/1207.3231}, 2012.

\bibitem{carlini}
Nicholas Carlini and David Wagner.
\newblock Towards evaluating the robustness of neural networks.
\newblock In {\em IEEE Symposium on Security and Privacy}, 2017.

\bibitem{Caruana:2015}
Rich Caruana, Yin Lou, Johannes Gehrke, Paul Koch, Marc Sturm, and Noemie
  Elhadad.
\newblock Intelligible models for healthcare: Predicting pneumonia risk and
  hospital 30-day readmission.
\newblock In {\em Proceedings of the 21th ACM SIGKDD International Conference
  on Knowledge Discovery and Data Mining}, KDD '15, pages 1721--1730, New York,
  NY, USA, 2015. ACM.

\bibitem{chang2010}
Hsien-Yen Chang and Jonathan~P Weiner.
\newblock An in-depth assessment of a diagnosis-based risk adjustment model
  based on national health insurance claimsl the application of the johns
  hopkins adjusted clinical group case-mix system in taiwan.
\newblock {\em BMC Medicine}, 8(7), 2010.

\bibitem{lcf}
Amit Dhurandhar, Steve Hanneke, and Liu Yang.
\newblock Learning with changing features.
\newblock {\em arXiv preprint arXiv:1705.00219}, 2017.

\bibitem{jmlr14Amit}
Amit Dhurandhar and Marek Petrik.
\newblock Efficient and accurate methods for updating generalized linear models
  with multiple feature additions.
\newblock {\em Journal of Mach. Learning Research}, 2014.

\bibitem{rsi}
Finale Doshi-Velez and Been Kim.
\newblock Towards a rigorous science of interpretable machine learning.
\newblock In {\em https://arxiv.org/abs/1702.08608v2}, 2017.

\bibitem{finale}
Finale Doshi{-}Velez, Mason Kortz, Ryan Budish, Chris Bavitz, Sam Gershman,
  David O'Brien, Stuart Schieber, James Waldo, David Weinberger, and Alexandra
  Wood.
\newblock Accountability of {AI} under the law: The role of explanation.
\newblock {\em CoRR}, abs/1711.01134, 2017.

\bibitem{malwarecom}
Manuel Egele, Theodoor Scholte, Engin Kirda, and Christopher Kruegel.
\newblock A survey on automated dynamic malware-analysis techniques and tools.
\newblock {\em ACM Comput. Surv.}, 44(2):6:1--6:42, 2012.

\bibitem{kerasmnistMLP}
Matsuyamax Fchollet and Kemaswill.
\newblock Keras mnist mlp implementation.
\newblock In {\em
  https://github.com/fchollet/keras/blob/master/examples/mnist\_mlp.py}, 2017.

\bibitem{kerasmnistCNN}
Smerity Fchollet, Matsuyamax and Kemaswill.
\newblock Keras mnist cnn implementations.
\newblock In {\em
  https://github.com/fchollet/keras/blob/master/examples/mnist\_cnn.py}, 2017.

\bibitem{feldman2000minimization}
Jacob Feldman.
\newblock Minimization of boolean complexity in human concept learning.
\newblock {\em Nature}, 407(6804):630--633, 2000.

\bibitem{FICO}
FICO.
\newblock Explainable machine learning challenge.
\newblock
  \url{https://community.fico.com/s/explainable-machine-learning-challenge?tabset-3158a=2},
  2018.
\newblock Accessed: 2018-10-25.

\bibitem{boost}
Yoav Freund and Robert~E. Schapire.
\newblock Decision-theoretic generalization of on-line learning and an
  application to boosting.
\newblock {\em Journal of Computer and System Sciences}, 55(1):119--139, 1997.

\bibitem{classo}
B.~R. Gaines.
\newblock {Algorithms for Fitting the Constrained Lasso}.
\newblock In {\em https://arxiv.org/pdf/1611.01511}, 2016.

\bibitem{distill}
Jeff~Dean Geoffrey~Hinton, Oriol~Vinyals.
\newblock Distilling the knowledge in a neural network.
\newblock In {\em https://arxiv.org/abs/1503.02531}, 2015.

\bibitem{gan}
I.~Goodfellow, Y.~Bengio, and A.~Courville.
\newblock {\em Deep Learning.}
\newblock MIT Press, 2016.

\bibitem{proto}
Karthik Gurumoorthy, Amit Dhurandhar, and Guillermo Cecchi.
\newblock Protodash: Fast interpretable prototype selection.
\newblock In {\em https://arxiv.org/abs/1707.01212v2}, 2017.

\bibitem{nlpsmell}
Elkin Gutierrez, Amit Dhurandhar, Andreas Keller, Guillermo Cecchi, and Pablo
  Meyer.
\newblock Predicting natural language descriptions of smells.
\newblock {\em bioRXiv preprint bioRxiv 331470}, 2018.

\bibitem{irt}
Tsuyoshi Idé and Amit Dhurandhar.
\newblock Supervised item response models for informative prediction.
\newblock {\em Knowl. Inf. Syst.}, 51(1):235--257, April 2017.

\bibitem{resnet}
Shaoqing Ren Jian~Sun Kaiming~He, Xiangyu~Zhang.
\newblock Deep residual learning for image recognition.
\newblock In {\em Intl. Conference on Computer Vision and Pattern Recognition
  (CVPR)}, 2015.

\bibitem{olfs}
Andreas Keller, Richard~C. Gerkin, Yuanfang Guan, Amit Dhurandhar, Gabor Turu,
  Bence Szalai, Joel~D. Mainland, Yusuke Ihara, Chung~Wen Yu, Russ Wolfinger,
  Celine Vens, Leander Schietgat, Kurt De~Grave, Raquel Norel, Gustavo
  Stolovitzky, Guillermo~A. Cecchi, Leslie~B. Vosshall, and Pablo Meyer.
\newblock Predicting human olfactory perception from chemical features of odor
  molecules.
\newblock {\em Science}, 355(6327):820--826, 2017.

\bibitem{l2c}
Been Kim, Rajiv Khanna, and Oluwasanmi Koyejo.
\newblock Examples are not enough, learn to criticize! criticism for
  interpretability.
\newblock In {\em In Advances of Neural Inf. Proc. Systems}, 2016.

\bibitem{Lakk}
Himabindu Lakkaraju, Stephen~H. Bach, and Jure Leskovec.
\newblock Interpretable decision sets: A joint framework for description and
  prediction.
\newblock In {\em Proceedings of the 22Nd ACM SIGKDD International Conference
  on Knowledge Discovery and Data Mining}, KDD, 2016.

\bibitem{lei2016rationalizing}
Tao Lei, Regina Barzilay, and Tommi Jaakkola.
\newblock Rationalizing neural predictions.
\newblock {\em arXiv preprint arXiv:1606.04155}, 2016.

\bibitem{lipton2016mythos}
Zachary~C Lipton.
\newblock The mythos of model interpretability.
\newblock {\em arXiv preprint arXiv:1606.03490}, 2016.

\bibitem{lisman1995storage}
John~E Lisman and Marco~AP Idiart.
\newblock Storage of 7 plus/minus 2 short-term memories in oscillatory
  subcycles.
\newblock {\em Science}, 267(5203):1512, 1995.

\bibitem{priv16}
David Lopez-Paz, Leon Bottou, Bernhard Sch\"{o}lkopf, and Vladimir Vapnik.
\newblock Unifying distillation and privileged information.
\newblock In {\em International Conference on Learning Representations (ICLR
  2016)}, 2016.

\bibitem{sevenmiller}
G.~Miller.
\newblock The magical number seven, plus or minus two: Some limits on our
  capacity for processing information.
\newblock {\em Psychological Review}, 63(2):81–97, 1956.

\bibitem{montavon2017methods}
Gr{\'e}goire Montavon, Wojciech Samek, and Klaus-Robert M{\"u}ller.
\newblock Methods for interpreting and understanding deep neural networks.
\newblock {\em Digital Signal Processing}, 2017.

\bibitem{nguyen2016synthesizing}
Anh Nguyen, Alexey Dosovitskiy, Jason Yosinski, Thomas Brox, and Jeff Clune.
\newblock Synthesizing the preferred inputs for neurons in neural networks via
  deep generator networks.
\newblock In {\em Advances in Neural Information Processing Systems}, pages
  3387--3395, 2016.

\bibitem{nguyen2016multifaceted}
Anh Nguyen, Jason Yosinski, and Jeff Clune.
\newblock Multifaceted feature visualization: Uncovering the different types of
  features learned by each neuron in deep neural networks.
\newblock {\em arXiv preprint arXiv:1602.03616}, 2016.

\bibitem{pearl}
J.~Pearl.
\newblock {\em Causality: Models, Reasoning, and Inference}.
\newblock Cambridge University Press, 2000.

\bibitem{imdp}
Marek Petrik and Ronny Luss.
\newblock Interpretable policies for dynamic product recommendations.
\newblock In {\em In Uncertainty in Artificial Intelligence}, 2016.

\bibitem{lime}
Marco Ribeiro, Sameer Singh, and Carlos Guestrin.
\newblock "why should i trust you?” explaining the predictions of any
  classifier.
\newblock In {\em ACM SIGKDD Intl. Conference on Knowledge Discovery and Data
  Mining}, 2016.

\bibitem{fitnet}
Adriana Romero, Nicolas Ballas, Samira~Ebrahimi Kahou, Antoine Chassang, Carlo
  Gatta, and Yoshua Bengio.
\newblock Fitnets: Hints for thin deep nets.
\newblock {\em arXiv preprint arXiv:1412.6550}, 2015.

\bibitem{unifiedPI}
Su-In~Lee Scott~Lundberg.
\newblock Unified framework for interpretable methods.
\newblock In {\em In Advances of Neural Inf. Proc. Systems}, 2017.

\bibitem{selvaraju2016grad}
Ramprasaath~R Selvaraju, Michael Cogswell, Abhishek Das, Ramakrishna Vedantam,
  Devi Parikh, and Dhruv Batra.
\newblock Grad-cam: Visual explanations from deep networks via gradient-based
  localization.
\newblock {\em See https://arxiv. org/abs/1610.02391 v3}, 2016.

\bibitem{toc}
M.~Sipser.
\newblock {\em Introduction to the Theory of Computation 3rd.}
\newblock Cengage Learning, 2013.

\bibitem{starfield1991}
B~Starfield, J~Weiner, L~Mumford, and D~Steinwachs.
\newblock Ambulatory care groups: a categorization of diagnoses for research
  and management.
\newblock {\em Health Services Research}, 26(5):53--74, 1991.

\bibitem{twl}
Guolong Su, Dennis Wei, Kush Varshney, and Dmitry Malioutov.
\newblock Interpretable two-level boolean rule learning for classification.
\newblock In {\em https://arxiv.org/abs/1606.05798}, 2016.

\bibitem{reinf}
R.~Sutton and A.~Barto.
\newblock {\em Reinforcement Learning: An Introduction.}
\newblock MIT Press, 1998.

\bibitem{vapnik}
V.~Vapnik.
\newblock {\em Statistical Learning Theory}.
\newblock Wiley-Interscience, 1998.

\bibitem{kushtst}
Kush Varshney.
\newblock Engineering safety in machine learning.
\newblock In {\em https://arxiv.org/abs/1601.04126}, 2016.

\bibitem{decl}
Fulton Wang and Cynthia Rudin.
\newblock Falling rule lists.
\newblock In {\em In AISTATS}, 2015.

\bibitem{zadrozny2002transforming}
Bianca Zadrozny and Charles Elkan.
\newblock Transforming classifier scores into accurate multiclass probability
  estimates.
\newblock In {\em Proceedings of the eighth ACM SIGKDD international conference
  on Knowledge discovery and data mining}, pages 694--699. ACM, 2002.

\bibitem{zhu2009human}
Xiaojin Zhu, Bryan~R Gibson, and Timothy~T Rogers.
\newblock Human rademacher complexity.
\newblock In {\em Advances in Neural Information Processing Systems}, pages
  2322--2330, 2009.

\bibitem{machteach}
Xiaojin Zhu, Adish Singla, Sandra Zilles, and Anna~N. Rafferty.
\newblock {An Overview of Machine Teaching}.
\newblock In {\em https://arxiv.org/abs/1801.05927}, 2018.

\end{thebibliography}

\end{document}